\documentclass{article}

\PassOptionsToPackage{numbers, compress}{natbib}

\usepackage[preprint]{neurips_2026}
\makeatletter
\renewcommand{\@noticestring}{}
\makeatother

\usepackage[utf8]{inputenc} 
\usepackage[T1]{fontenc}    
\usepackage{hyperref}       
\usepackage{url}            
\usepackage{booktabs}       
\usepackage{amsfonts}       
\usepackage{nicefrac}       
\usepackage{microtype}      
\usepackage{xcolor}         
\usepackage{graphicx}
\usepackage{subcaption}
\usepackage{amsmath}
\usepackage{amsthm}
\usepackage{tabularx}
\newcolumntype{Y}{>{\raggedright\arraybackslash}X}

\theoremstyle{definition}
\newtheorem{definition}{Definition}[section]

\theoremstyle{plain}
\newtheorem{theorem}{Theorem}[section]

\newtheorem{remark}{Remark}[section]

\title{Maturing Markov Decision Processes: Decision Making under Increasing Information and Shrinking Action Sets}
\author{%
  Jiaxi~Liu$^{*\dagger}$ \\
  Ant International\\
  School of Economics, Sichuan University\\
  \texttt{liujiaxi@stu.scu.edu.cn}
  \And
  Aiping~Yang$^{*}$ \\
  Ant International\\
  \texttt{yangaiping.yap@ant-intl.com}
  \And
  Yuhang~Yang$^{*}$ \\
  Ant International\\
  \texttt{yumen.yyh@ant-intl.com}
  \And
  Shuqi~Zhang$^{*}$ \\
  Ant International\\
  \texttt{zhangshuqi.zsq@ant-intl.com}
  \And
  Zewei~Dong$^{\dagger}$ \\
  Ant International\\
  \texttt{zewei.dong@ant-intl.com}
  \And
  Jiangming~Yang \\
  Ant International\\
  \texttt{jiangming.yang@ant-intl.com}
  \And
  Xuebin~Chen \\
  School of Economics, Sichuan University\\
  School of Economics, Fudan University\\
  \texttt{chenxb@fudan.edu.cn}
}
\begin{document}

\maketitle

\vspace{0.6em}
\begin{abstract}
Sequential decision problems often exhibit an asymmetric evolution of information and decision flexibility: as a decision cycle unfolds, the agent receives richer information while feasible actions expire due to operational cutoffs, commitments, or resource constraints. Standard MDP formulations typically flatten this structure into stage-dependent state descriptions and action masks, thereby obscuring the nested information--action asymmetry that determines which decisions are urgent and which can be deferred. We introduce \emph{Maturing Markov Decision Processes} (MMDPs), a formulation built around this information--action asymmetry. We characterize one of its key consequences through an \emph{expiring-action priority} principle, which identifies the actions that must be resolved before the next stage. Motivated by this structure, we develop a structure-aware reinforcement learning framework with stage-aware policy design, expiring-action abstraction, and search-augmented learning with distillation. Experiments on a controlled multi-supplier replenishment problem, simplified cash-management environments of increasing complexity, and a production-scale simulator show that explicitly modeling this asymmetry improves learning efficiency and becomes increasingly valuable as decision problems scale.
\end{abstract}

\vfill
\noindent\rule{0.4\columnwidth}{0.4pt}
\vspace{-0.25em}

{\footnotesize
\noindent
$^{*}$Equal contribution. \\
$^{\dagger}$Corresponding authors: Jiaxi Liu (\texttt{liujiaxi@stu.scu.edu.cn}) and
Zewei Dong (\texttt{zewei.dong@ant-intl.com}).
}

\section{Introduction}

Many real-world sequential decision problems evolve in a structurally asymmetric way: as a decision cycle unfolds, the agent receives increasingly informative state signals through forecasts, observations, or operational updates, while the feasible action set contracts due to expiring opportunities, operational cutoffs, irreversible commitments, or resource constraints (see Figure \ref{fig:intro_asymmetry}). Such dynamics arise across domains including dynamic pricing \cite{netessine2006dynamic,cheung2017dynamic}, inventory control \cite{wang2008inventory,liu2023ai,xie2026deepstock}, resource allocation \cite{ashlagi2018maximum,jacquillat2024learning}, and financial liquidity management \cite{pate1990dynamic,castro2025estimating}. Early decisions therefore operate under coarse information but high flexibility, whereas later decisions benefit from richer information but reduced optionality, fundamentally reshaping the exploration–exploitation trade-off.

\begin{figure}[!ht]
    \centering
    \includegraphics[width=0.75\linewidth]{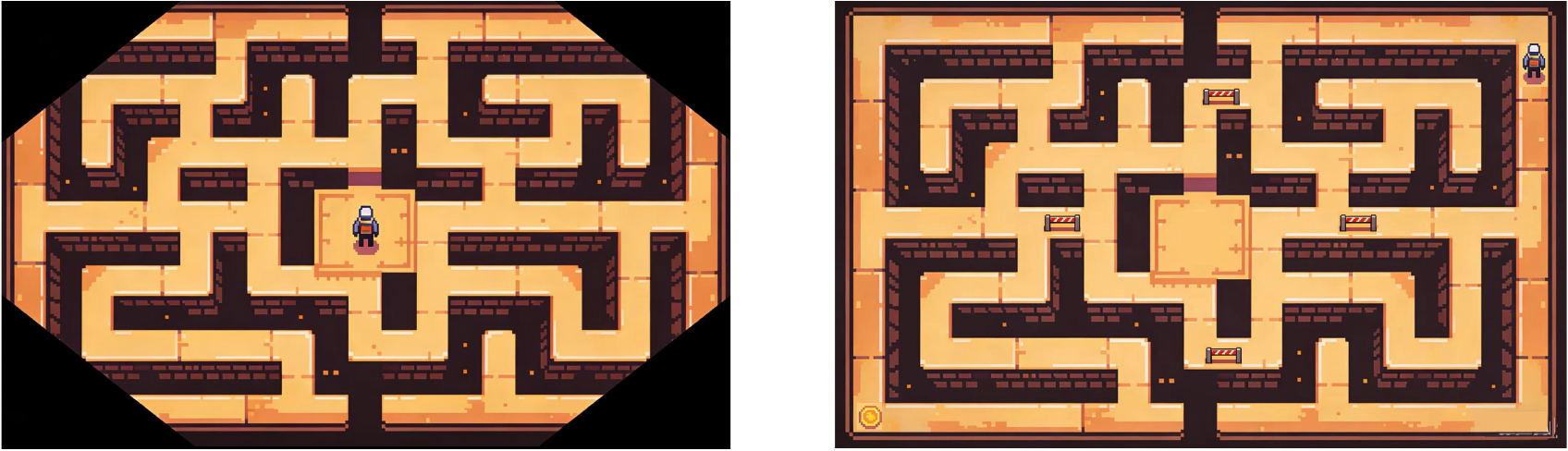}
    \caption{
    \textbf{Maturing Markov Decision Processes.}
    Left: early stage with limited state information but many feasible actions.
    Right: later stage with increasing information while action sets shrink.
    }
    \label{fig:intro_asymmetry}
\end{figure}

Standard, constrained, non-stationary, and partially observable MDP variants address fixed dynamics, feasibility constraints, time-varying environments, or observation uncertainty \cite{puterman2014markov,sutton1998reinforcement,altman2021constrained,
lecarpentier2019non,kaelbling1998planning,littman2009tutorial}. Standard finite-horizon MDPs can absorb stage indices, forecast updates, and feasibility masks into an enlarged state-action representation. However, this flat encoding treats the evolution of information and action support as bookkeeping, rather than as the organizing structure of the decision problem. The key question is not merely which actions are feasible at a given stage, but which decisions must be resolved before they expire and which can be deferred until more informative signals arrive. We introduce \emph{Maturing Markov Decision Processes} (MMDPs), where \emph{maturing} emphasizes that the process becomes more informed while intervention opportunities progressively expire.

Our contributions are fourfold. First, we formalize \emph{Maturing Markov Decision Processes} as a class of sequential decision problems characterized by increasing information together with shrinking feasible action sets, and we characterize the resulting information--action asymmetry. Second, we provide a theoretical characterization of this structure through the \emph{expiring-action priority} principle, which identifies the part of the action space that must be resolved before the next stage transition renders it unavailable. Third, we translate this structural insight into a structure-aware reinforcement learning (RL) framework with three components: stage-aware policy design, expiring-action abstraction, and search-augmented learning with distillation. Fourth, we validate the framework across three levels of empirical settings: a controlled multi-supplier replenishment problem, simplified cash-management environments of increasing complexity, and a larger production-scale simulator. The results show that the benefits of MMDP modeling and structure-aware learning become more pronounced as the decision problem scales.

\section{Preliminaries}

A finite-horizon MDP is a tuple
$\mathcal M=(\mathcal S,\mathcal A,P,r,\gamma)$,
where $\mathcal S$ and $\mathcal A$ are fixed state and action spaces,
$P$ is the transition kernel, $r$ is the reward function, and
$\gamma\in(0,1]$ is the discount factor. A decision cycle consists of a
horizon-$H$ trajectory generated by a policy $\pi$. The state value function and state–action value function under policy $\pi$ are defined as
\[
V^\pi(s)=
\mathbb E_\pi\!\left[\sum_{k=0}^{H-1}\gamma^k r(s_k,a_k)\mid s_0=s\right],
\quad
Q^\pi(s,a)=
\mathbb E_\pi\!\left[\sum_{k=0}^{H-1}\gamma^k r(s_k,a_k)\mid s_0=s,a_0=a\right],
\]
with optimal values $V^\star(s):=\max_\pi V^\pi(s)$ and
$Q^\star(s,a):=\max_\pi Q^\pi(s,a)$. Classical MDPs treat the state
representation and feasible action set as fixed within a decision cycle.
MMDPs relax this assumption by making both information and feasibility
stage-dependent.

\section{Maturing Markov Decision Processes}
\label{sec:mmdp}

We now introduce \emph{Maturing Markov Decision Processes}, a class of finite-horizon decision problems characterized by \emph{increasing information} and \emph{shrinking action sets}.

\begin{definition}[Maturing Markov Decision Process]
An \emph{Maturing Markov Decision Process (MMDP)} is a finite-horizon decision
process specified by
\[
\mathcal M^{\mathrm{M}}
=
\Big(\{\mathcal S_n\}_{n=0}^N,\{\mathcal A_n\}_{n=0}^N,
\{P_n\}_{n=0}^N,\{r_n\}_{n=0}^N,\Gamma,\gamma\Big),
\]
where $n\in\{0,\dots,N\}$ indexes \textit{stages} within a decision cycle. At stage $n$,
the agent observes a state $s_t\in\mathcal S_n$, selects an action
$a_t\in\mathcal A_n$, receives reward
$r_n(s_t,a_t)$, and transitions according to the stage-dependent kernel
$P_n(\cdot\mid s_t,a_t)$. The stage progression mechanism $\Gamma$ governs the monotone evolution of the
stage index across the decision cycle, either exogenously or through designated stage-termination actions.
\end{definition}

\subsection{Core Structural Properties: Information--Action Asymmetry}
\label{sec:mmdp:properties}

\paragraph{Increasing Information.}
MMDPs are characterized by a sequence of stage-dependent state representations
induced from an underlying full-information state space $\mathcal S^\star$ via deterministic mappings $\phi_n:\mathcal S^\star\to\mathcal S_n$, such that
\begin{equation}
\mathcal S_0 \prec \mathcal S_1 \prec \cdots \prec \mathcal S_N.   
\end{equation}
That is, later stages reveal strictly more informative representations. 

\paragraph{Shrinking Action Sets.}
MMDPs are also characterized by nested feasible action sets that contract over
stages due to expiring opportunities and irreversible constraints:
\begin{equation}
\mathcal A_0 \supset \mathcal A_1 \supset \cdots \supset \mathcal A_N.
\end{equation}
We also allow stage-level termination actions. For each boundary $m<N$, let $a_m^{\mathrm{end}}$ denote the action of making no further domain commitment before stage $m+1$. At stage $n$, all termination actions $a_m^{\mathrm{end}}$ with $m\ge n$ are admissible, so the augmented action sets remain nested. In particular, $a_n^{\mathrm{end}}\in \mathcal A_n\setminus \mathcal A_{n+1}$.

The strictness of both structural relations is essential: if information does not increase across two consecutive stages, or if the feasible action set does not shrink, then the corresponding stage separation becomes redundant for the information--action asymmetry studied here. The following remark states this non-degeneracy convention. Appendix~\ref{sec:nondegenerate_stage} provides further interpretation.

\begin{remark}[Non-Degenerate Stage Partition]
\label{rem:nondegenerate_stage}
Consider two consecutive stages $n$ and $n+1$. If either
$\mathcal S_n = \mathcal S_{n+1}$ or $\mathcal A_n = \mathcal A_{n+1}$,
then the two stages collapse into a single effective stage for the purposes of the MMDP abstraction.
\end{remark}

\paragraph{Trade-Off and Exploration Burden.}
A defining feature of MMDPs is the asymmetric co-evolution of increasing information and shrinking action sets. This creates a sequential trade-off between acting early with greater flexibility and acting later with richer information, while also altering the effective exploration burden across different stages: a stage-aware formulation only needs to reason over the representation and feasible actions relevant to the current stage. 

To make this intuition concrete, let $S_n:=|\mathcal S_n|$ and $A_n:=|\mathcal A_n|$ denote the effective state and action cardinalities at stage $n$. Consider a simple homogeneous scaling model in which
$S_{n+1}=\alpha S_n$ and $A_{n+1}=\beta A_n$, with $\alpha>1$ (information increasing) and $0<\beta<1$ (action shrinking). 

\begin{remark}[Stage-Aware Hardness Reduction]
\label{rem:hardness_reduction}
Under the homogeneous scaling model, the cumulative stage-wise state--action cardinality satisfies
$G_{\text{MMDP}} < G_{\text{MDP}}$.
Moreover, when $\alpha\beta<1$, the geometric series converges and
\begin{equation} \label{equa:mmdp_proxy}
G_{\text{MMDP}}
= O\!\left(\frac{S_0A_0}{1-\alpha\beta}\right).
\end{equation}
\end{remark}

Remark~\ref{rem:hardness_reduction} demonstrates the value of modeling information and actions jointly at each stage, rather than always forcing the learner to face the worst-case state--action support throughout the cycle. Furthermore, when action shrinkage dominates information growth in the sense that $\alpha\beta<1$, the resulting structural hardness proxy remains bounded. The full derivation is given in Appendix~\ref{app:hardness_proxy}.

\subsection{Expiring--Action Priority}

We now turn from global stage structure to stage-wise decision making. A convenient special case of increasing information is the product refinement
\[
\mathcal S_{n+1} = \mathcal S_n \times \mathcal Z_{n+1}, \qquad n=0,\dots,N-1,
\]
where $\mathcal Z_{n+1}$ denotes newly revealed information at stage $n+1$. Under shrinking action sets, we define the expiring action set by
\[
\mathcal D_n := \mathcal A_n \setminus \mathcal A_{n+1}, \qquad n=0,\dots,N-1.
\]
Thus, the stage-$n$ action space naturally splits into two parts: expiring actions $\mathcal D_n$, which become infeasible at stage $n+1$, and persistent actions $\mathcal A_{n+1}$, which remain available after additional information is revealed. The expiring set also contains the boundary termination action $a_n^{\mathrm{end}}$, which represents deferring unresolved persistent decisions to stage $n+1$.

This decomposition suggests that the key structural question is not whether all actions should be decided immediately, but rather when persistent actions can be safely deferred to later stages. The following theorem shows that, if postponing persistent actions does not reduce their expected optimal value, then the optimal stage-$n$ decision can be restricted to the expiring part of the action space.

\begin{theorem}[Expiring-Action Priority Principle]
\label{thm:expiring_priority}
For an MMDP with product refinement, suppose the boundary termination action
$a_n^{\mathrm{end}}\in\mathcal D_n$ satisfies
\[
Q^\star(s,a_n^{\mathrm{end}})
=
\mathbb E_{z}\!\left[V^\star((s,z))\right].
\]
Suppose further that for any persistent action $a' \in \mathcal A_{n+1}$,
state $s \in \mathcal S_n$, and newly revealed information
$z \in \mathcal Z_{n+1}$,
\[
Q^\star(s,a')
\le
\mathbb E_{z}
\!\left[
Q^\star\big((s,z),a'\big)
\right].
\]
Then there exists an optimal policy $\pi^\star$ such that, for every stage
$n < N$,
\[
\pi^\star(s) \in \arg\max_{a \in \mathcal D_n} Q^\star(s,a).
\]
\end{theorem}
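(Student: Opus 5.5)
The plan is to show that, at any stage-$n$ state $s$, the maximum of $Q^\star(s,\cdot)$ over the full set $\mathcal A_n=\mathcal D_n\cup\mathcal A_{n+1}$ is attained inside $\mathcal D_n$. Once this holds at every stage, we build an optimal policy by selecting an expiring maximizer at each stage.

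\textbf{Step 1: pointwise domination of persistent actions.} Fix $n<N$, a state $s\in\mathcal S_n$, and a persistent action $a'\in\mathcal A_{n+1}$. The second hypothesis gives
\[
Q^\star(s,a')\le \mathbb E_{z}\big[Q^\star((s,z),a')\big].
\]
The state $(s,z)$ lies in $\mathcal S_{n+1}=\mathcal S_n\times\mathcal Z_{n+1}$, and $a'$ is feasible at stage $n+1$. Hence the Bellman optimality characterization $V^\star(x)=\max_{a\in\mathcal A_{n+1}}Q^\star(x,a)$ at stage $n+1$ yields
\[
Q^\star((s,z),a')\le V^\star((s,z))
\]
for every $z$. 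Taking expectations and using the first hypothesis,
\[
\mathbb E_z\big[Q^\star((s,z),a')\big]\le \mathbb E_z\big[V^\star((s,z))\big]=Q^\star(s,a_n^{\mathrm{end}}).
\]
Chaining the two displays gives $Q^\star(s,a')\le Q^\star(s,a_n^{\mathrm{end}})$. Since $a_n^{\mathrm{end}}\in\mathcal D_n$, we conclude
\[
\max_{a\in\mathcal A_n}Q^\star(s,a)=\max_{a\in\mathcal D_n}Q^\star(s,a).
\]

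\textbf{Step 2: constructing the policy.} For each stage $n<N$ and each $s\in\mathcal S_n$, let $\pi^\star(s)$ be any element of $\arg\max_{a\in\mathcal D_n}Q^\star(s,a)$. This set is nonempty, either because the action sets are finite or by assuming the maximum is attained. At stage $N$, take any greedy action. By Step 1, $\pi^\star$ is greedy with respect to $Q^\star$ over the full feasible set $\mathcal A_n$.

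\textbf{Step 3: optimality of the greedy policy.} The standard finite-horizon result says a policy greedy with respect to the optimal $Q^\star$ is optimal. We prove this by backward induction on the stage index, starting from stage $N$. Assuming $V^{\pi^\star}=V^\star$ on later stages, at stage $n$ we get
\[
V^{\pi^\star}(s)=r_n(s,\pi^\star(s))+\gamma\,\mathbb E\big[V^\star(s')\big]=Q^\star(s,\pi^\star(s))=V^\star(s).
\]
We also need the stage progression $\Gamma$ to be monotone. This keeps the induction well founded even when stages end by termination actions rather than exogenously: every transition stays in the same stage or moves to a later one. If several time steps can occur within one stage, we induct on the pair (stage, remaining horizon), ordered lexicographically.

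\textbf{Main obstacle.} The step most likely to need care is the identity $V^\star((s,z))=\max_{a\in\mathcal A_{n+1}}Q^\star((s,z),a)$ used in Step 1. It requires that $V^\star$ and $Q^\star$ at $(s,z)$ are stage-$(n+1)$ quantities, with the same remaining horizon as in the hypothesis on $a_n^{\mathrm{end}}$. I would handle this by reading both hypotheses as statements about the stage-indexed optimal functions $Q^\star_n$ and $V^\star_{n+1}$, and stating that convention explicitly before the chain of inequalities. With that convention, Step 1 is only a few lines.
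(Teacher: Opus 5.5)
Your proposal is correct and follows the paper's own argument. Both rest on the chain $Q^\star(s,a')\le\mathbb E_z[Q^\star((s,z),a')]\le\mathbb E_z[V^\star((s,z))]=Q^\star(s,a_n^{\mathrm{end}})$, which shows that every persistent action is weakly dominated by the termination action in $\mathcal D_n$. Your Steps 2--3 (greedy selection over $\mathcal D_n$ is optimal by backward induction, using the stage-indexed reading of $Q^\star_n$ and $V^\star_{n+1}$) simply spell out what the paper's sketch compresses into its final ``Therefore.''
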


The key to Theorem~\ref{thm:expiring_priority} lies in whether persistent actions can be removed from the optimization of the current stage without sacrificing solution quality. Theorem~\ref{thm:expiring_priority} provides a sufficient condition for such a lossless reduction: if postponing a persistent action to the next stage does not reduce its expected optimal value after additional information is revealed, then there exists an optimal policy that focuses only on the expiring set at the current stage. A proof sketch is given in Appendix~\ref{app:expiring_priority_proof}.

Conversely, if persistent actions interact strongly with expiring actions through shared resources, feasibility coupling, or early commitments that alter the ordering of expiring decisions, the conditions of Theorem~\ref{thm:expiring_priority} may not hold. In such cases, expiring-action abstraction should be treated as a structured approximation rather than an exact reduction.

\begin{remark}[Expiring-Set Hardness Refinement]
Under the same proxy, expiring-action priority further gives
\begin{equation}\label{equa:mmdp_proxy_refined}
G_{\text{MMDP}}^{(\mathcal D)}
=
O\!\left(\frac{(1-\beta)S_0A_0}{1-\alpha\beta}\right).
\end{equation}
\end{remark}
It yields a sharper structural hardness proxy by charging only for actions that must be resolved immediately. The derivation is given in Appendix~\ref{app:hardness_proxy_2}.

\section{Structure-Aware Reinforcement Learning for MMDPs}
\label{sec:structure-aware_reinforcement_learning}
Given the MMDP structure, our goal is to align the policy class and optimization interface with stage-dependent information and feasibility. We instantiate this idea through three components: stage-aware policies, expiring-action abstraction, and search-augmented distillation.

\subsection{Stage-Aware Policy Design}
\label{sec:stage-aware_policy}
In a standard episodic MDP, a stationary policy takes the form $\pi(a\mid s)$. In an MMDP, the same underlying system requires different behavior across stages because both information and feasible actions vary with $n$. We therefore use a stage-aware policy $\pi_\theta(a\mid s,n)$.

We construct a stage-conditioned representation $x_n := \phi(s,n)$, pass it through a shared backbone $h_\theta=f_\theta(x_n)$, and use stage-specialized actor heads
\[
\pi_\theta(a\mid s,n)=\mathrm{Head}_n(h_\theta).
\]
This shares representations across stages while preserving stage-local action selection. Implementation details are provided in Appendix~\ref{app:stageaware_impl}.

\subsection{Expiring-Action Abstraction}
\label{sec:expiring-action_abstraction}
Theorem~\ref{thm:expiring_priority} motivates restricting the current-stage decision interface from $\mathcal A_n$ to the expiring set $\mathcal D_n$ when persistent actions can be deferred. Since $\mathcal D_n$ may be discrete, continuous, or hybrid, we use an abstraction that retains \emph{which} expiring decision should be resolved while delegating lower-level parameters to a state-dependent executor.

In cash management, a transfer action is $a=(u,v,\rho)$, where $(u,v)$ is a transfer edge and $\rho$ is the amount parameter. Our edge-only abstraction learns a policy over expiring edges $(u,v)\sim\pi_\theta(\cdot\mid s,n)$, and an executor maps the selected edge to $(u,v,\rho(s,u,v))$. This converts the hybrid transfer problem into a discrete stage-local action space, following the broader idea of simplifying hybrid or parameterized action interfaces for RL \cite{hausknecht2015deep,masson2016reinforcement}. Executor details are given in Appendix~\ref{app:executor_impl}.

\subsection{Search-Augmented Learning}
\label{sec:search-augmented_learning}
The expiring-action abstraction turns the MMDP reduction into a compact stage-local search space. Given a policy $\pi_\theta(a\mid s,n)$, let $\mathcal C_K(s,n,\pi_\theta)$ denote at most $K$ candidate actions around the current policy. Search selects
\[
a^{\mathrm{search}}
\in
\arg\max_{a\in\mathcal C_K(s,n,\pi_\theta)}
\widehat Q(s,n,a),
\]
where $\widehat Q$ is computed by rollout, simulation, or task-specific scoring.

At evaluation time, this gives a search-assisted controller. During training, following search-improved policy targets and expert iteration \cite{silver2017mastering,anthony2017thinking}, we store improved decisions $\mathcal B=\{(s_i,n_i,a_i^{\mathrm{search}})\}_{i=1}^M$ and distill them into the policy:
\[
\mathcal L(\theta)
=
\mathcal L_{\mathrm{RL}}(\theta)
+
\lambda_{\mathrm{dist}}
\mathcal L_{\mathrm{dist}}(\theta;\mathcal B),
\]
where $\lambda_{\mathrm{dist}}>0$ controls the distillation strength. Search and training details are provided in Appendix~\ref{app:search_impl}.

\section{Experiments}

We evaluate both the proposed MMDP formulation and the structure-aware RL framework in two application domains. The first is a multi-supplier replenishment problem, which provides a controlled benchmark for isolating the modeling effect of MMDPs in a repeated staged decision process. The second is a cash management problem derived from a real industrial setting. For this domain, we consider both an abstracted cash-management benchmark and a production-scale simulator that incorporates the real operational rules and constraints. Figure~\ref{fig:mmdp_examples} illustrates the MMDP structure in the two domains; training protocols and hyperparameters are provided in Appendices~\ref{app:training_eval_protocols} and~\ref{app:hyperparameters}.

\begin{figure}[htbp]
\centering
\begin{subfigure}[t]{0.48\textwidth}
\centering
\includegraphics[width=\linewidth]{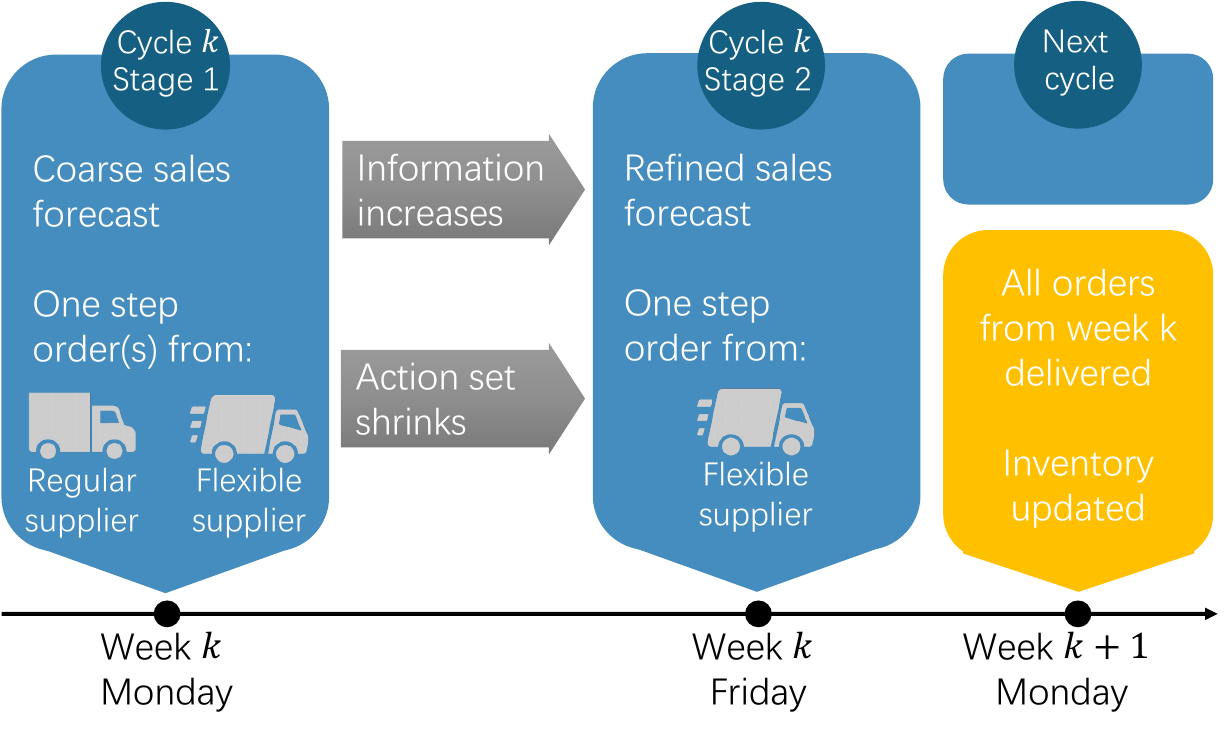}
\caption{Replenishment problem.}
\label{fig:replenishment_mmdp}
\end{subfigure}
\hfill
\begin{subfigure}[t]{0.48\textwidth}
\centering
\includegraphics[width=\linewidth]{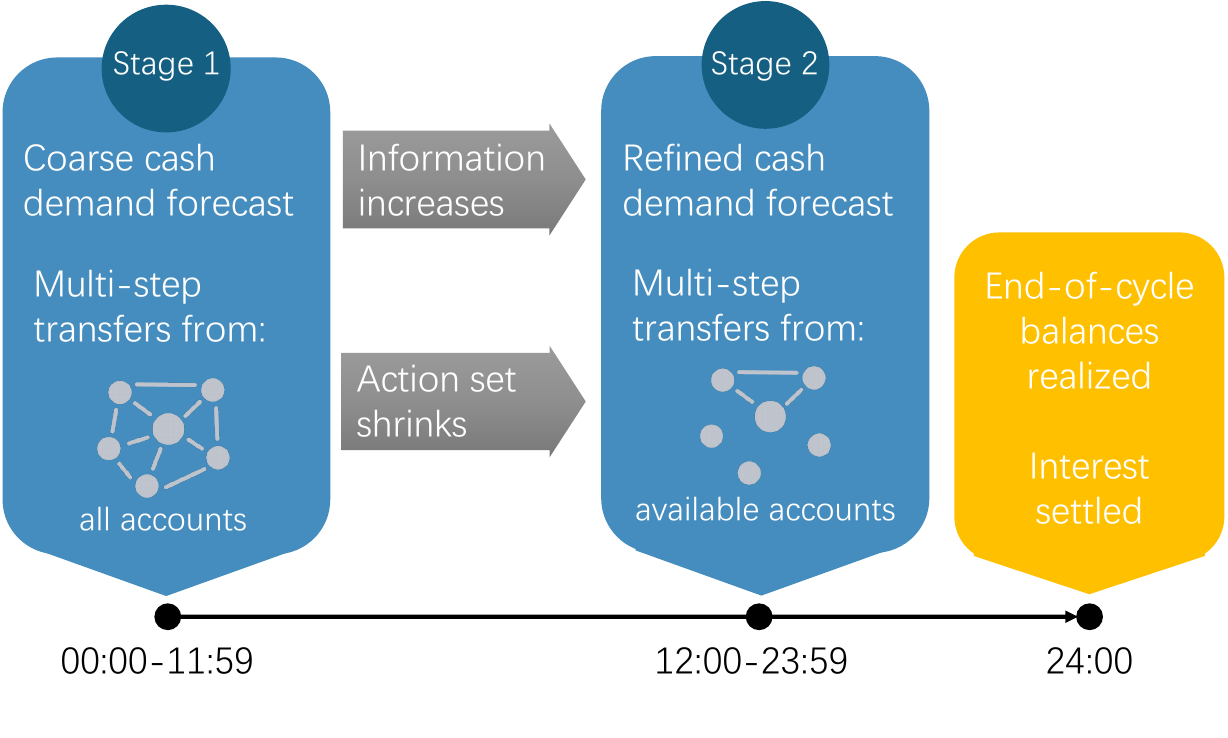}
\caption{Cash management problem.}
\label{fig:cash_mmdp}
\end{subfigure}

\caption{
\textbf{MMDP structure in the two application domains.}
(a) the replenishment problem unfolds over repeated cycles with a single decision at each stage.
(b) the cash management problem unfolds over a daily decision cycle with multiple transfer steps within each stage.
}
\label{fig:mmdp_examples}
\end{figure}

\subsection{Staged Replenishment Benchmark}

We use a staged replenishment benchmark as the first controlled testbed for evaluating the MMDP formulation. 
Prior work has demonstrated the practical importance of RL for inventory control in realistic settings \cite{liu2023ai,xie2026deepstock}. However, in many real supply chains, firms procure inventory from multiple suppliers with different cost and flexibility characteristics \cite{simchi1999designing,zipkin2000foundations,veeraraghavan2008now}. Regular suppliers offer lower procurement cost but require earlier commitment, 
whereas more flexible suppliers allow later orders at a higher price. 

Each trajectory consists of multiple weekly replenishment cycles. As illustrated in Figure~\ref{fig:replenishment_mmdp}, each cycle contains two stages. In the first stage, the agent observes a coarse sales forecast and may place orders from both a regular supplier and a more flexible supplier. Later in the same cycle, the agent receives a refined forecast and may place an additional order, but only through the flexible supplier. All orders placed within the cycle are delivered at the beginning of the next cycle. We compare two standard RL algorithms, deep Q-networks (DQN) \cite{mnih2015human} and proximal policy optimization (PPO) \cite{schulman2017proximal}, under two modeling formulations: a standard MDP formulation (denoted as Flat-MDP in figures) and the proposed MMDP formulation. Under Flat-MDP, the two-stage replenishment cycle is represented as a single unified decision problem over the full within-cycle action space. Under MMDP, by contrast, the decision process is modeled stage-wise: the early stage captures the regular-supplier commitment together with the initial flexible order under a coarse forecast, while the later stage exposes only the remaining flexible adjustment after the refined forecast is revealed. Detailed environment specifications are provided in Appendix~\ref{app:replenishment_env}; Appendix~\ref{app:benchmark_complexity} further summarizes the effective decision complexity of each benchmark.

Figure~\ref{fig:replenishment_results} compares PPO and DQN under the Flat-MDP and MMDP formulations. Across both algorithms, MMDP modeling starts from higher rewards, accelerates early learning, yields smoother trajectories, and reaches high-reward thresholds with fewer environment interactions. The effect is especially pronounced for DQN: the Flat-MDP agent never reaches the highest threshold shown in Figure~\ref{fig:time_to_threshold}, whereas the MMDP formulation reaches it and converges to a higher final reward. These results indicate that explicit stage structure improves sample efficiency and can also improve final policy quality when the flat discrete action space is difficult to explore.

\begin{figure}[htbp]
\centering
\begin{subfigure}[t]{0.45\linewidth}
    \centering
    \includegraphics[width=\linewidth]{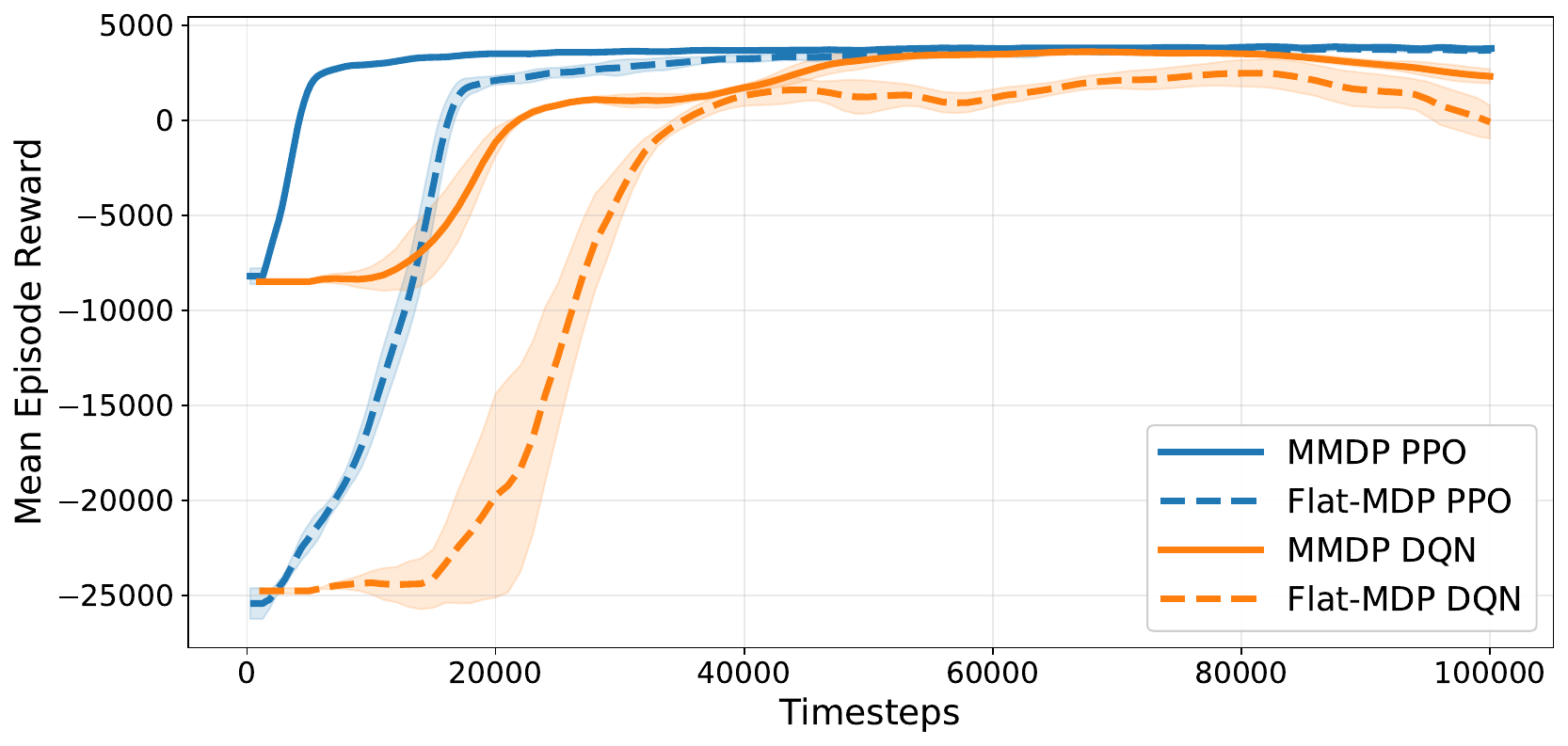}
    \caption{Learning dynamics under flat MDP and MMDP parameterizations.}
    \label{fig:learning_curves}
\end{subfigure}
\hfill
\begin{subfigure}[t]{0.45\linewidth}
    \centering
    \includegraphics[width=\linewidth]{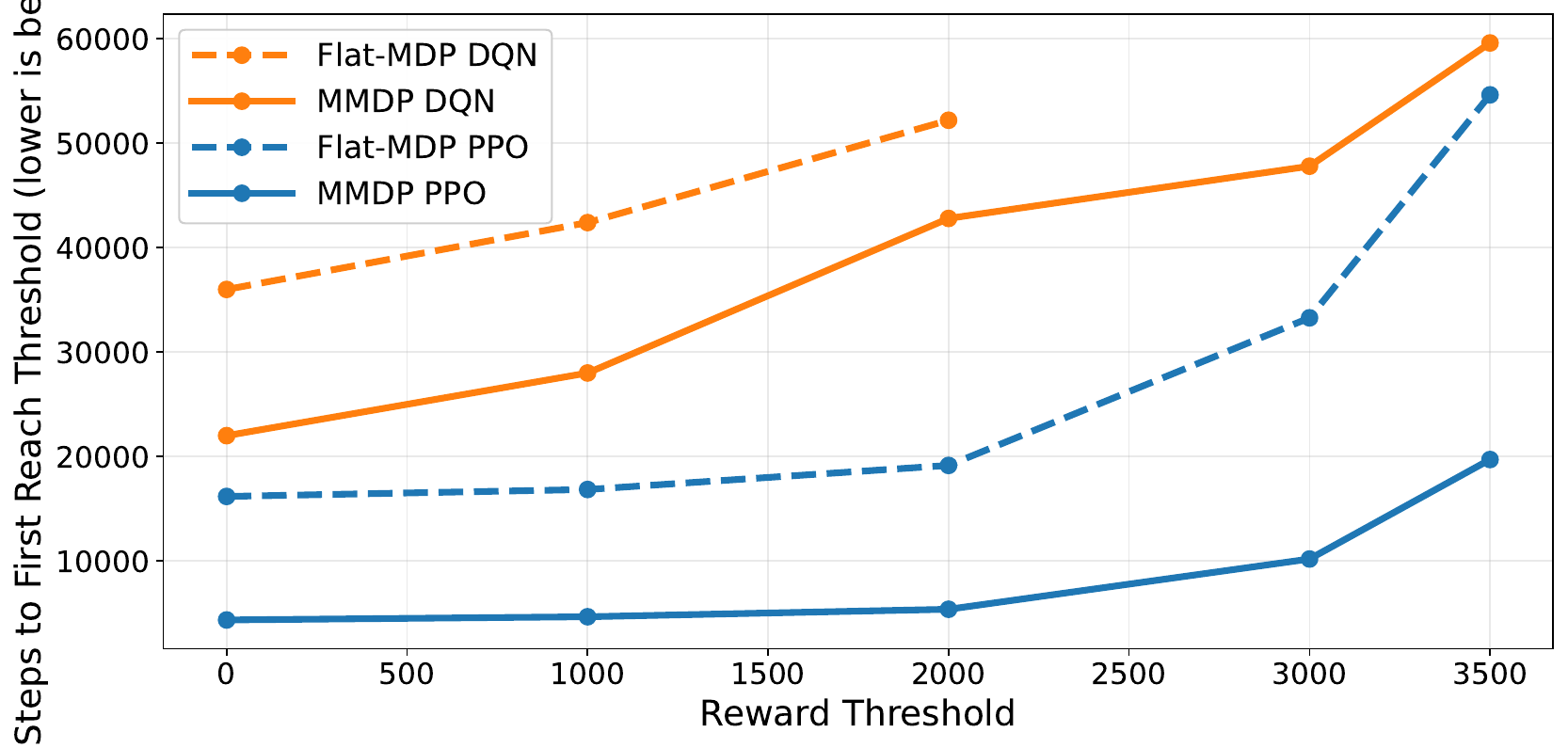}
    \caption{Timesteps required to reach different reward thresholds. Lower values indicate faster learning.}
    \label{fig:time_to_threshold}
\end{subfigure}

\caption{
Performance comparison between flat MDP and MMDP formulations in the staged replenishment benchmark.
(a) Learning curves for PPO and DQN policies.
(b) Time required to reach a set of reward thresholds.
Across both algorithms, the MMDP formulation consistently accelerates learning and reduces the number of interactions required to obtain high-reward policies.
}
\label{fig:replenishment_results}
\end{figure}

\subsection{Cash Management Case Study}
\begin{figure}[t]
    \centering
    \begin{subfigure}[t]{0.48\textwidth}
        \centering
        \includegraphics[width=\textwidth]{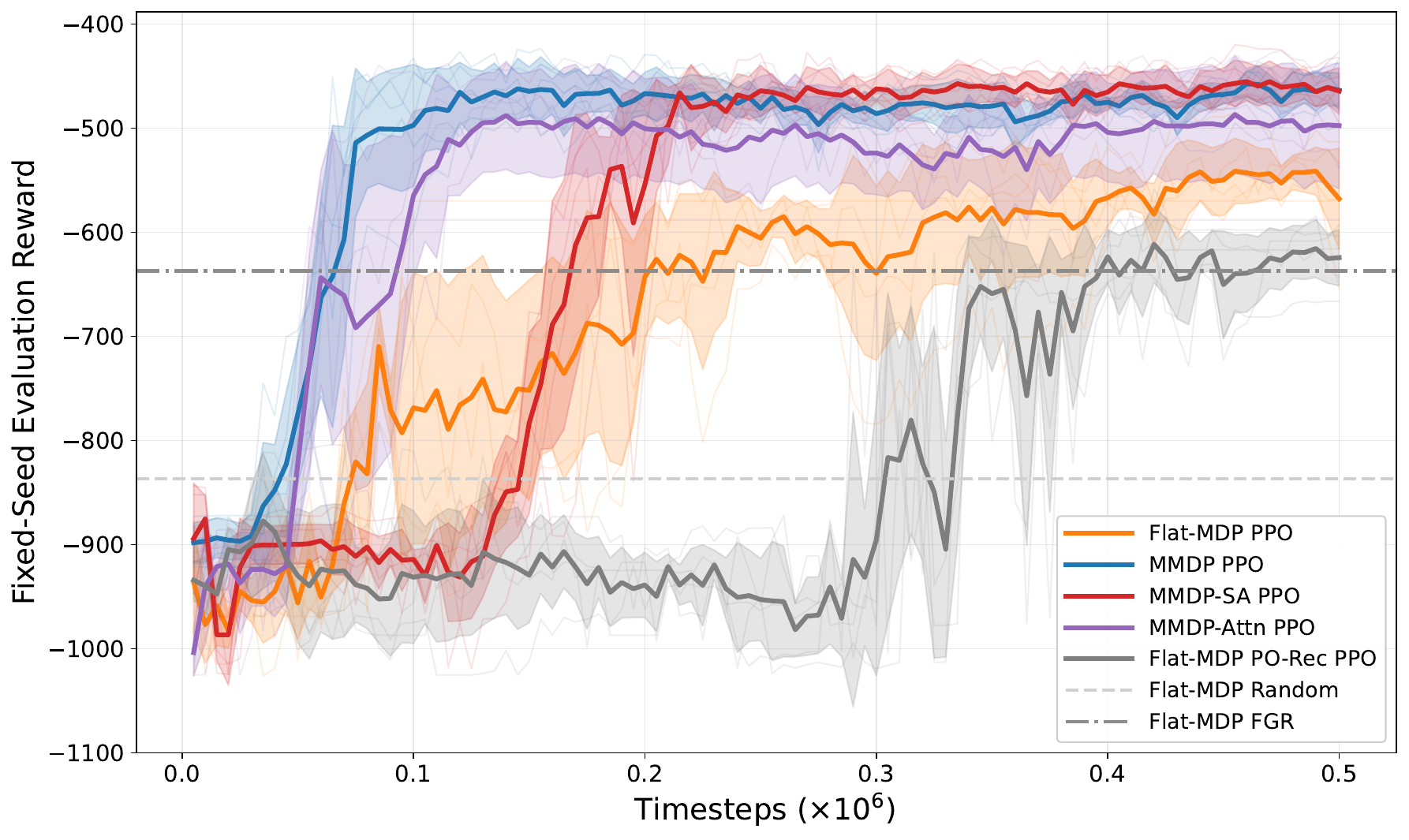}
        \caption{Five-account setting}
        \label{fig:cash_v1}
    \end{subfigure}
    \hfill
    \begin{subfigure}[t]{0.48\textwidth}
        \centering
        \includegraphics[width=\textwidth]{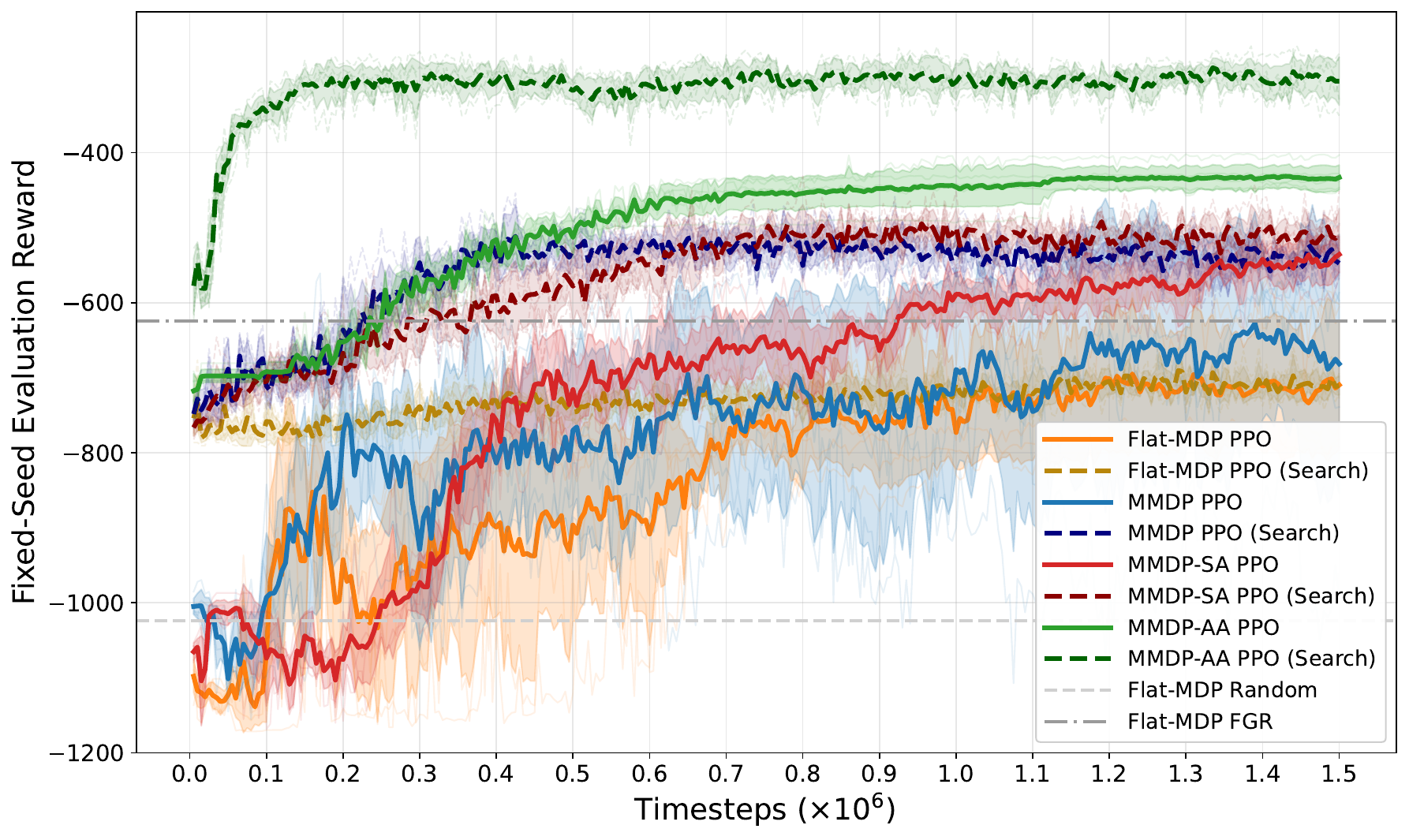}
        \caption{Ten-account setting}
        \label{fig:cash_v2}
    \end{subfigure}

    \caption{
    \textbf{Training dynamics in simplified cash-management benchmarks.}
    MMDP-based methods outperform Flat-MDP PPO in both settings, and search further improves performance. Shaded areas indicate standard deviation over $n=5$ seeds.
}
    \label{fig:cash_eval_curves}
\end{figure}

We next evaluate the proposed framework in a cash-management problem derived from a real industrial application. Corporate cash management has long been studied \cite{pate1990dynamic,baumol1952transactions,miller1966model,gormley2007utility,ferstl2010cash} and our setting focuses on multi-account intraday cash rebalancing: the agent sequentially moves funds across a network of accounts within a daily decision cycle so as to satisfy uncertain liquidity needs, preserve yield, and avoid end-of-day shortfalls. The problem can also be viewed as a sequential combinatorial control problem over a financial network, which is consistent with recent uses of RL for graph-structured and combinatorial decision problems \cite{khalil2017learning,nazari2018reinforcement,delarue2020reinforcement}. As the cycle progresses, account-level cash-demand information becomes more accurate while feasible transfer decisions become increasingly constrained, making this a natural MMDP.

We study this domain at two levels. First, we use simplified five-account and ten-account environments to enable controlled comparisons of flat modeling, MMDP modeling, action abstraction, and search. Second, we evaluate the same ideas in a larger production-scale simulator with the account scale and operational constraints of the real deployment setting. Detailed environment descriptions are provided in Appendix~\ref{app:cashflow_env}, and training protocols and hyperparameters are summarized in Appendix~\ref{app:supplementary_protocols}.

\subsubsection{Five-account setting.}
We begin with the five-account environment, which contains one master account, one high-yield investment account, and three operational accounts with uncertain end-of-day liquidity demand. The daily decision cycle consists of two stages: at the first stage, a broader transfer graph is available under a coarse forecast, whereas at the second stage, after additional information is revealed, only a restricted subset of transfer routes remains feasible.

We compare Flat-MDP PPO with MMDP-based PPO variants, the implementation of the stage-aware policy (SA-MMDP) introduced in Section \ref{sec:stage-aware_policy}, and attention-based actors (Attn-MMDP). To test whether explicit staged modeling can be replaced by generic recurrent memory under partial observability, we also include a recurrent partial-observation PPO baseline (PO-Rec), following \cite{ni2021recurrent}. Auxiliary baselines include random actions and a forecast-guided rebalancing policy (FGR), inspired by forecast-aware cash-management policies \cite{gormley2007utility,salas2017empowering}. Baseline details are provided in Appendix~\ref{app:cash_heuristic} and
Appendix~\ref{app:cash_partial_obs_baseline}.

Figure~\ref{fig:cash_v1} shows a clear hierarchy across methods. Flat-MDP PPO improves over random and FGR baselines only gradually, whereas explicit MMDP modeling leads to faster learning and stronger final performance. SA-MMDP achieves the best overall result. Attn-MMDP also improves over the flat formulation. By contrast, the recurrent partial-observation baseline does not recover the benefit of explicit staged structure, suggesting that generic memory is not an adequate substitute for modeling the information--action asymmetry directly.

\subsubsection{Ten-account setting.}
We next consider the ten-account environment, which scales the same cash-management problem to a substantially larger account network and a correspondingly larger transfer decision space. The network still contains one master account, one high-yield investment account, but eight operational accounts. Detailed environment specifications are also provided in Appendix~\ref{app:simplified_cashflow_env}.

In this part, we further evaluate the action-abstraction component introduced in Section~\ref{sec:expiring-action_abstraction} through MMDP-AA, which reduces the original hybrid transfer decision to a discrete abstracted action space. We also include the search-augmented variants introduced in Section~\ref{sec:search-augmented_learning} to test the contribution of search-augmented learning under a harder decision regime.

Figure~\ref{fig:cash_v2} shows that performance differences widen substantially as the problem scales. Flat-MDP learns slowly and remains weak throughout training, while explicit MMDP modeling continues to provide a clear improvement. Search brings additional gains, but its effect depends strongly on the underlying representation: the largest benefits arise when search is combined with structure-aware formulations rather than with the flat baseline alone. Among the learned methods, MMDP-AA is the strongest non-search policy, and MMDP-AA (Search) achieves the best overall performance by a wide margin.

The smaller gap between MMDP-AA and Flat-MDP-AA reflects the strength of the ablation: Flat-MDP-AA already uses the proposed edge-only action interface and state-dependent executor, which absorb much of the hybrid-action optimization difficulty. Even under this strong ablation, the MMDP interface remains consistently better, including when combined with search. Overall, the ten-account results show that as the action space grows, the benefits of combining staged modeling, action abstraction, and search become more pronounced.

\paragraph{Generality beyond RL policies.}



As a supplementary interface-level test, we also evaluate direct-LLM policies on the ten-account benchmark, where the model receives a textual state description and admissible actions and outputs one transfer action (detailed in Appendix~\ref{app:cash_llm}). Figure~\ref{fig:cash_v2_final} shows that MMDP-style prompting improves over flat prompting across the tested API-accessible models, although all direct-LLM policies remain below the strongest structure-aware RL methods. Table~\ref{tab:cash_v2_best_validation} summarizes the ten-account results in a more compact form. Across learned policies, rule-based baselines, random policies, and direct-LLM policies, the MMDP interface consistently improves over the corresponding Flat-MDP interface. The strongest result is obtained by combining MMDP, action abstraction, PPO, and search. Because these results depend on provider-side models and prompt sensitivity, we treat them as exploratory evidence.

\begin{figure}[t]
    \centering
    \begin{subfigure}[t]{0.48\textwidth}
        \centering
        \includegraphics[width=\textwidth]{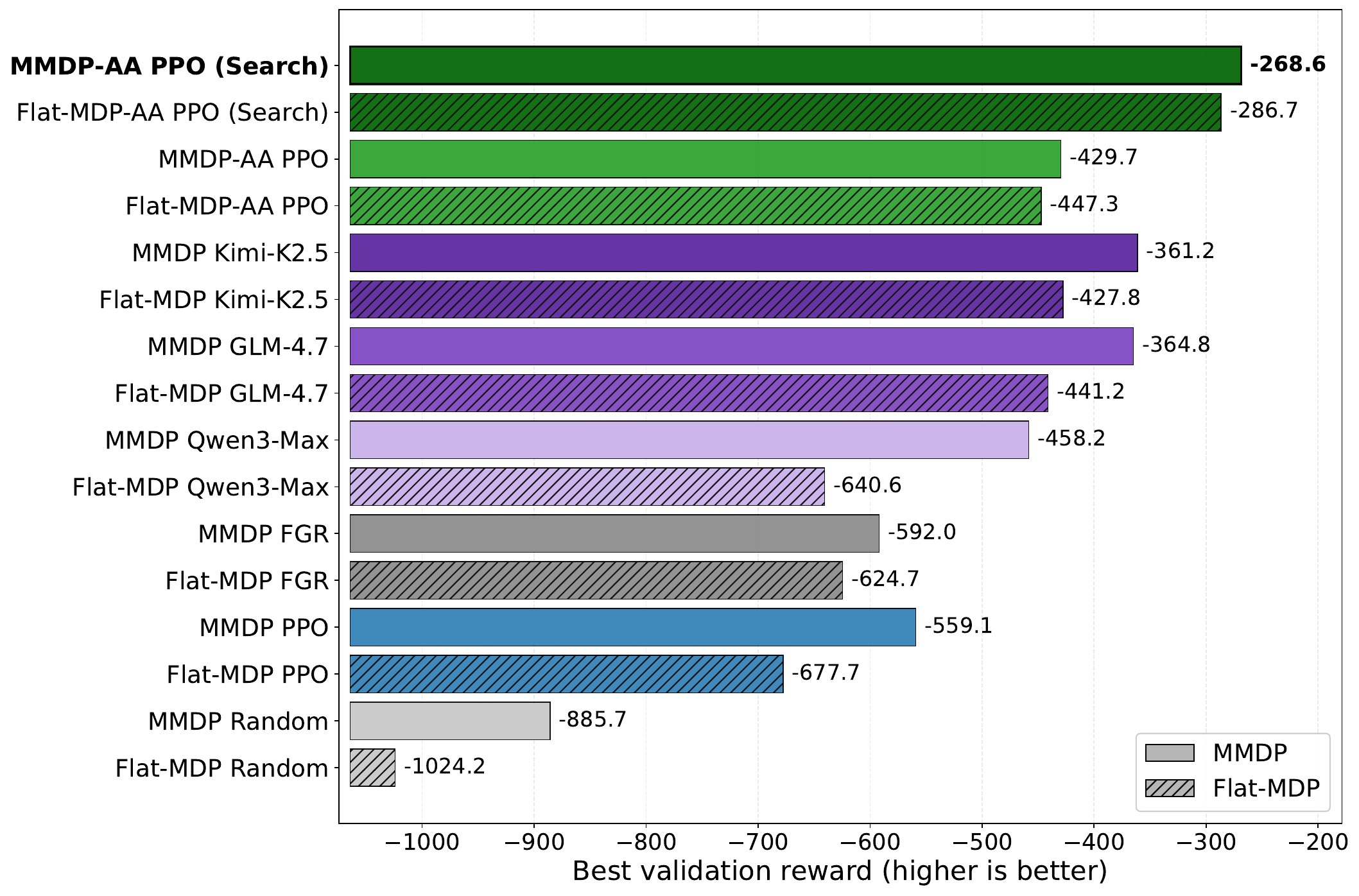}
        \caption{Ten-account benchmark}
        \label{fig:cash_v2_final}
    \end{subfigure}
    \hfill
    \begin{subfigure}[t]{0.48\textwidth}
        \centering
        \includegraphics[width=\textwidth]{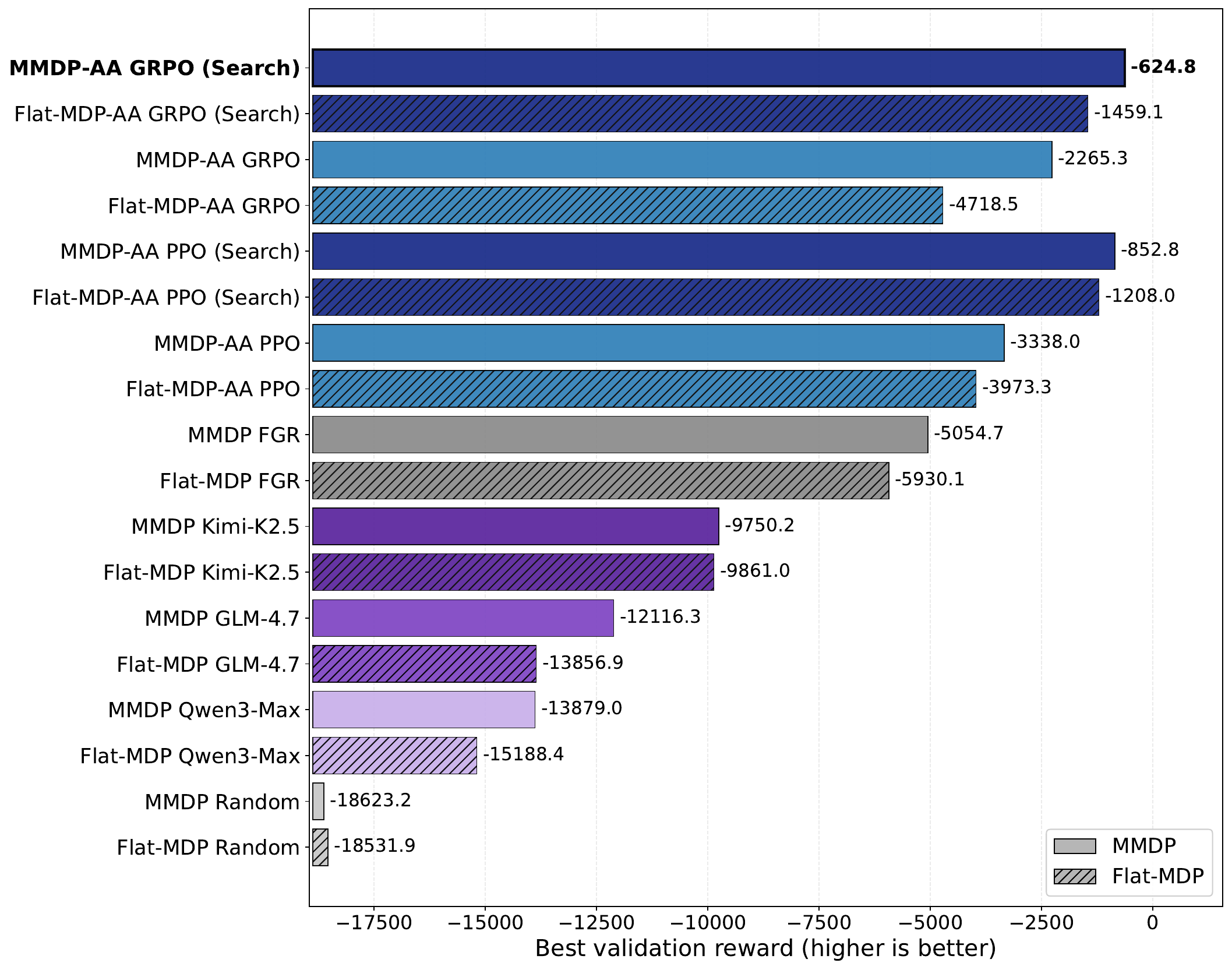}
        \caption{Production-scale simulator}
        \label{fig:cash_v3_final}
    \end{subfigure}
\caption{
\textbf{Final performance in larger cash-management settings.} MMDP-style interfaces improve matched direct-LLM baselines, while structure-aware RL with abstraction and search performs best. Higher values indicate better final evaluation reward.
}
\label{fig:cash_eval_curves_large}
\end{figure}

\begin{table*}[t]
\centering
\caption{
Best validation performance on the ten-account cash-management benchmark.
For trained RL methods, we report the best validation checkpoint selected on fixed held-out seeds.
For rule-based, random, and direct-LLM baselines, we report fixed multi-seed evaluation results.
The paired gap row reports MMDP minus Flat-MDP over matched seeds; positive values indicate improvement from the MMDP interface.
Higher reward is better.
}
\label{tab:cash_v2_best_validation}
\resizebox{\textwidth}{!}{
\begin{tabular}{lccccccccc}
\toprule
Category
& \multicolumn{4}{c}{RL}
& \multicolumn{2}{c}{Benchmark}
& \multicolumn{3}{c}{Direct LLM} \\
\cmidrule(lr){2-5}
\cmidrule(lr){6-7}
\cmidrule(lr){8-10}
Interface
& PPO
& PPO (Search)
& AA-PPO
& AA-PPO (Search)
& FGR
& Random
& Kimi-K2.5
& GLM-4.7
& Qwen3-Max \\
\midrule
MMDP
& $-559.1 \pm 87.3$
& $-473.8 \pm 16.7$
& $-429.7 \pm 17.2$
& $\mathbf{-268.6 \pm 10.4}$
& $-592.0 \pm 5.8$
& $-885.7 \pm 19.7$
& $-361.2 \pm 13.9$
& $-364.8 \pm 6.7$
& $-458.2 \pm 6.6$ \\
Flat-MDP
& $-677.7 \pm 112.7$
& $-666.6 \pm 13.1$
& $-447.3 \pm 58.7$
& $-286.7 \pm 10.4$
& $-624.7 \pm 6.3$
& $-1024.2 \pm 30.3$
& $-427.8 \pm 7.9$
& $-441.2 \pm 8.8$
& $-640.6 \pm 5.9$ \\
\midrule
Paired gap
& $+118.6 \pm 109.6$
& $+192.8 \pm 19.0$
& $+17.6 \pm 55.4$
& $+18.1 \pm 10.8$
& $+32.6 \pm 3.9$
& $+138.4 \pm 30.7$
& $+66.5 \pm 8.1$
& $+76.5 \pm 3.2$
& $+182.4 \pm 4.6$ \\
\bottomrule
\end{tabular}
}
\end{table*}

\subsubsection{Production-scale setting}
\label{sec:production_scale_cash}

We further evaluate the proposed framework in a production-scale cash-management simulator derived from the real operational environment. Compared with the simplified benchmarks, this setting contains a larger account network and real constraints. Detailed environment specifications are provided in Appendix~\ref{app:prod_cashflow_env}.

While the simplified benchmarks use PPO as the base policy optimizer, in the production-scale simulator we additionally experiment with a GRPO-style optimizer, following the group-relative normalization idea in \cite{shao2024deepseekmath}. This is suitable for the production-scale simulator because we can evaluate multiple candidate trajectories under the same initial condition and construct relative advantages from their realized returns, while avoiding reliance on a highly accurate critic in a delayed and noisy reward environment.

As shown in Figure~\ref{fig:cash_v3_final}, MMDP modeling again provides a stronger learning signal than the Flat-MDP formulation. Notably, the GRPO-style optimizer improves over PPO under MMDP, but underperforms PPO under Flat-MDP, suggesting that group-relative updates benefit from the cleaner stage-local comparison induced by MMDPs. We also evaluate direct-LLM baselines; unlike in the ten-account benchmark, they are less competitive than compact RL policies, likely due to the much larger and more constrained action space.

Beyond offline evaluation, we also conduct a preliminary production validation.
In a one-month backtest over September 2025, the MMDP-GRPO policy improves estimated revenue by 5.3\% relative to the existing forecast-guided production workflow under the same historical demand and constraint traces.
After deployment, we further observe an 18.6\% increase in the business adoption rate of model-generated recommendations during the first online week, compared with the month before deployment.

\section{Related Work}

\textbf{Relation to standard and constrained MDPs.}
Classical MDPs provide the dominant formalism for sequential decision making under fully observed Markovian dynamics \cite{puterman2014markov,sutton1998reinforcement}. Constrained MDPs further incorporate feasibility or safety requirements \cite{altman2021constrained}. A sufficiently augmented finite-horizon MDP can in principle encode stage indices, time-dependent observations, and admissible-action masks. The goal of MMDPs is therefore not to claim greater representational generality, but to isolate a recurring structure in which information is progressively refined while feasible actions expire. This makes the central object the coupled evolution of information revelation and action availability, rather than generic time dependence alone.

\textbf{Relation to POMDPs.}
MMDPs are related to partially observable MDPs \cite{kaelbling1998planning,smallwood1973optimal,spaan2012partially}, since early stages may expose coarser information than later stages. Recurrent model-free RL can therefore serve as a practical baseline in such settings \cite{ni2021recurrent}. However, POMDP methods focus primarily on hidden-state inference and belief tracking, whereas MMDPs emphasize the interaction between information refinement and action expiration. The key question is which actions must be resolved before they disappear, and which persistent actions can be deferred until richer information is available.

\textbf{Relation to non-stationary and continual RL.}
MMDPs also relate to non-stationary MDPs and non-stationary RL, where rewards, transitions, or task distributions change over time \cite{auer2008near,besbes2014stochastic,cheung2020reinforcement,padakandla2021survey}. They are also connected to continual RL and continual learning, which study adaptation, transfer, catastrophic forgetting, and loss of plasticity across changing tasks or environments \cite{parisi2019continual,khetarpal2022towards,chung2024parseval,juliani2024study}. In contrast, an MMDP studies a fixed decision problem within a cycle: the non-stationarity is structured by known information refinement and action-set contraction, rather than arbitrary drift or a sequence of unrelated tasks.

\textbf{Relation to structured actions and search-augmented RL.}
MMDPs are connected to RL for structured decision problems, including graph-based combinatorial optimization and routing \cite{khalil2017learning,nazari2018reinforcement,delarue2020reinforcement}, as well as hybrid or parameterized action spaces \cite{hausknecht2015deep,masson2016reinforcement}. These works typically treat the action representation as fixed. MMDPs instead exploit the stage-dependent distinction between expiring and persistent decisions. Our search-augmented component is also related to methods that use search to generate improved decisions and distill them into a learned policy \cite{silver2017mastering,anthony2017thinking}; in our setting, the MMDP structure makes such search more tractable by reducing the stage-local decision space.

\section{Discussion}

The central message of this paper is that some sequential decision problems are not well described by a single fixed state representation and a single fixed action space. Instead, they exhibit a structured within-cycle evolution in which information becomes more refined while intervention flexibility contracts. Making this structure explicit is useful not only descriptively, but also algorithmically: it motivates stage-aware policies, expiring-action abstraction, and search-augmented learning, with the empirical gains becoming larger as the decision problem scales.

The strength of this reduction depends on the assumptions behind Theorem~\ref{thm:expiring_priority}. Expiring-action abstraction is exact only when persistent actions can be safely deferred; when persistent and expiring actions are coupled through shared resources, early commitments, or feasibility interactions, the abstraction should be interpreted as a structured approximation. Our simplified benchmarks are therefore the primary vehicle for isolating the modeling effect, while the production-like simulator provides supporting evidence that the same structural ideas remain useful under more realistic operational constraints.

\bibliographystyle{unsrtnat}
\bibliography{ref}
\clearpage
\appendix

\section{Additional Details on MMDP Structure}
\label{app:mmdp_structure}
\subsection{Interpretation of non-degenerate stages.}
\label{sec:nondegenerate_stage}
Remark~\ref{rem:nondegenerate_stage} should be read as a statement about the information--action asymmetry isolated by the MMDP abstraction. If $\mathcal S_n=\mathcal S_{n+1}$ but $\mathcal A_n \supset \mathcal A_{n+1}$, then stage $n+1$ provides no informational advantage, and waiting only removes feasible actions; the corresponding disappearing decisions can therefore be resolved at stage $n$. Conversely, if $\mathcal A_n=\mathcal A_{n+1}$ but $\mathcal S_n \prec \mathcal S_{n+1}$, then delaying sacrifices no action availability while revealing richer information; the corresponding decision can therefore be deferred to stage $n+1$. Thus, in either case, the boundary does not represent the joint trade-off between increasing information and shrinking action sets. This convention does not rule out stage boundaries that are decision-relevant for other reasons, such as stage-dependent rewards, transitions, or timing costs; it only identifies when a boundary is redundant with respect to the MMDP structure studied in the paper.

\subsection{Structural Hardness Proxy Analysis}
\subsubsection{Coarse Stage-Aware Proxy}
\label{app:hardness_proxy}

We provide here the derivation underlying Remark~\ref{rem:hardness_reduction}. Exploration over the state--action space is a central challenge in reinforcement learning (RL) \cite{sutton1998reinforcement}. In finite-horizon episodic RL, classical regret and sample-efficiency bounds scale explicitly with the sizes of the state and action spaces together with the trajectory length \cite{jin2018q}. Motivated by this dependence, we analyze the \emph{exploration burden} induced by MMDP structure using the effective stage-wise state--action cardinality as a coarse structural hardness proxy.

Let $S_n:=|\mathcal S_n|$ and $A_n:=|\mathcal A_n|$ denote the effective state and action cardinalities at stage $n$. We consider a stage-dependent multiplicative scaling model in which
\[
S_{n+1} = \alpha_n S_n,\qquad A_{n+1} = \beta_n A_n,
\]
with $\alpha_n > 1$ (information increasing) and $0<\beta_n<1$ (action shrinking). For analytical clarity, we study the homogeneous case $\alpha_n \equiv \alpha$ and $\beta_n \equiv \beta$, which induces a geometric hardness profile.

\paragraph{Stage-aware MMDP scaling.}
When increasing information and shrinking action sets are explicitly modeled, we have
\[
S_n = S_0 \alpha^n,\qquad A_n = A_0 \beta^n,
\]
leading to
\begin{equation}\label{equa:mmdp_complexity}
G_{\text{MMDP}}
\;\propto\;
\sum_{n=0}^N S_n A_n
=
S_0 A_0 \sum_{n=0}^{N} (\alpha \beta)^n
=
S_NA_0\sum_{n=0}^{N}\frac{\beta^n}{\alpha^{N-n}}.
\end{equation}

\paragraph{Generic MDP baseline.}
If the stage structure is ignored and the learner must operate with the maximal state and action cardinalities throughout the horizon, let
\[
S_{\max} := \max_{0\le n\le N} S_n = S_N,\qquad
A_{\max} := \max_{0\le n\le N} A_n = A_0.
\]
A naive exploration burden then scales as
\begin{equation}
G_{\text{MDP}}
\;\propto\;
(N+1) S_{\max} A_{\max}
=
(N+1) S_N A_0.
\end{equation}

\paragraph{Implication.}
Since $0<\beta<1$ and $\alpha>1$, the stage-aware proxy is strictly smaller than the flat worst-case baseline:
\[
G_{\text{MMDP}} < G_{\text{MDP}}.
\]
Moreover, when $\alpha\beta<1$, the geometric series in \eqref{equa:mmdp_complexity} converges, which yields
\[
G_{\text{MMDP}}
= O\!\left(\frac{S_0A_0}{1-\alpha\beta}\right).
\]
Thus, unlike the generic MDP baseline, which scales linearly with the worst-case state--action cardinality, the MMDP proxy remains bounded when action shrinking dominates information growth. We stress that this is a structural hardness proxy rather than a formal regret bound; its role is to explain why stage-aware modeling can reduce the effective exploration burden relative to a flat formulation.

\subsubsection{Expiring-Set Refinement}
\label{app:hardness_proxy_2}
The stage-wise complexity proxy can be further tightened when only expiring actions must be resolved at stages $n=0,\dots,N-1$, while the final stage still evaluates the full remaining action set $A_N$. In this case, the effective proxy becomes
\begin{equation}
G_{\text{MMDP}}^{(\mathcal D)}
\;\propto\;
\sum_{n=0}^{N-1} S_n D_n + S_NA_N,
\end{equation}
where
\[
D_n := |\mathcal D_n| = A_n - A_{n+1}.
\]

Under the homogeneous scaling model
\[
S_n = S_0\alpha^n,
\qquad
A_n = A_0\beta^n,
\]
we have
\[
D_n = A_n - A_{n+1} = (1-\beta)A_n = (1-\beta)A_0\beta^n.
\]
Substituting into the stage-wise proxy gives
\begin{align}
G_{\text{MMDP}}^{(\mathcal D)}
&\propto
\sum_{n=0}^{N-1} S_n D_n + S_NA_N \\
&=
(1-\beta)S_0A_0\sum_{n=0}^{N-1}(\alpha\beta)^n
+ S_0A_0(\alpha\beta)^N.
\end{align}

When $\alpha\beta<1$, the geometric series converges, and therefore
\[
G_{\text{MMDP}}^{(\mathcal D)}
=
O\!\left(\frac{(1-\beta)S_0A_0}{1-\alpha\beta}\right).
\]

Compared with the coarse proxy $G_{\text{MMDP}}$ in \eqref{equa:mmdp_proxy}, this refinement is strictly sharper because it charges only for the part of the action space that must be resolved immediately.

\subsection{Proof of the Expiring-Action Priority Principle}
\label{app:expiring_priority_proof}

\begin{proof}[Proof sketch]
For a stage $n<N$. Under product refinement,
\[
\mathcal S_{n+1}=\mathcal S_n\times \mathcal Z_{n+1},
\]
and under shrinking action sets,
\[
\mathcal D_n := \mathcal A_n\setminus \mathcal A_{n+1}.
\]
Thus, the stage-$n$ action space decomposes into expiring actions
$\mathcal D_n$ and persistent actions $\mathcal A_{n+1}$.

Because stage transitions are explicit in an MMDP, the stage-$n$ decision problem naturally includes the option of ending the current stage and deferring unresolved persistent actions to stage $n+1$. We denote this stage-termination action by $a_n^{\mathrm{end}}$. Therefore
\[
a_n^{\mathrm{end}} \in \mathcal A_{n},
\qquad
a_n^{\mathrm{end}} \in \mathcal D_{n}.
\]
Its optimal action-value is
\[
Q^\star(s,a_n^{\mathrm{end}})
:=
\mathbb E_z\!\left[V^\star((s,z))\right],
\]
where $z \in \mathcal Z_{n+1}$ and 
\[
V^\star(s):=\max_{a\in\mathcal A_n} Q^\star(s,a),
\qquad
Q^\star(s,a):=\sup_\pi Q^\pi(s,a).
\]
Because $a_n^{\mathrm{end}}$ is a stage-transition action, its value is defined by the continuation value induced by ending stage $n$ and moving to stage $n+1$. Executing $a_n^{\mathrm{end}}$ does not commit to any persistent action; instead, it advances the process to the next stage, where additional information $z$ is revealed and subsequent decisions are again made optimally. Therefore, its optimal action-value is the expected next-stage optimal value.

Now consider any persistent action $a'\in\mathcal A_{n+1}$ in stage $n+1$. By the assumption
of Theorem~\ref{thm:expiring_priority},
\[
Q^\star(s,a')
\le
\mathbb E_z\!\left[
Q^\star((s,z),a')
\right].
\]
Since at state $(s,z)$ the optimal value is obtained by maximizing over all
admissible next-stage actions,
\[
Q^\star((s,z),a') \le V^\star((s,z)).
\]
Therefore,
\[
Q^\star(s,a')
\le
\mathbb E_z\!\left[V^\star((s,z))\right]
=
Q^\star(s,a_n^{\mathrm{end}}).
\]
Hence every persistent action is weakly dominated by the stop action:
committing to a persistent action already at stage $n$ is never better than
deferring it to stage $n+1$.

It follows that an optimal stage-$n$ decision only needs to compare two cases:

\textbf{Case 1:} execute an expiring action in $\mathcal D_n\setminus a_n^{\mathrm{end}}$ immediately;

\textbf{Case 2:} choose $a_n^{\mathrm{end}}$ and defer persistent decisions to
stage $n+1$.

Therefore,
\[
\pi^\star(s)\in \arg\max_{a\in {\mathcal D}_n} Q^\star(s,a).
\]
\end{proof}
This proof clarifies the role of the stage-termination action: it is part of the stage-level action set, but its purpose is to represent deferral rather than a domain commitment. The result shows that persistent domain actions need not be optimized at stage $n$; the stage-$n$ maximization can be restricted to the expiring set $\mathcal D_n$, which includes the boundary termination action.

\section{Structure-Aware RL Implementation Details}
\label{app:method_details}

\subsection{Stage-Aware Policy Implementation}
\label{app:stageaware_impl}

All MMDP policies receive the current stage index together with the stage-local
state representation and feasible-action information. In the SA-MMDP
architecture, a shared multilayer perceptron first encodes the stage-conditioned
observation. The actor then applies a stage-specific output head, while the
critic is shared across stages unless otherwise specified. This design keeps
representation learning shared across related stages but avoids forcing a
single actor head to model qualitatively different decision regimes.

\subsection{Expiring-Action Abstraction and Executor}
\label{app:executor_impl}

In the cash-management environments, the full transfer action is hybrid: the
agent must choose both a transfer edge and an amount parameter. MMDP-AA exposes
only the expiring edge decision to the learned policy. Once the policy selects
an edge $(u,v)$, a deterministic executor computes the transfer amount from the
current balances, predicted account gaps, source-account buffers, and a
minimum-transfer threshold. If the target account has a predicted funding gap,
the executor transfers a bounded fraction of the smaller of the source surplus
and target gap. If the target is the investment account, it transfers only from
available source surplus after preserving the source buffer. Infeasible or
negligible transfers are mapped to the stage-termination action.

\subsection{Search-Augmented Optimization}
\label{app:search_impl}

Search is applied on top of the stage-local abstract action interface. For each
decision state selected for search, the candidate set contains the policy
proposal and feasible alternatives from the current expiring set. In
cash-management experiments with amount parameters, candidate transfer ratios
are drawn from $\{0.25,0.5,0.75,1.0\}$ together with the policy-proposed ratio
when applicable. Each candidate is converted to an executable action by the
executor and scored using rollout, simulation, or task-specific scoring. The
best candidate is used directly at evaluation time and is also stored in a
distillation buffer during training. The distillation loss is a supervised
negative log-likelihood over the search-selected abstract action, combined with
the base PPO or GRPO objective.

\begin{table}[htbp]
\centering
\small
\caption{MMDP-AA search-augmented policy optimization with PPO or GRPO.}
\label{alg:mmdp_aa_search}
\begin{tabularx}{\linewidth}{@{}p{0.05\linewidth}Y@{}}
\toprule
 & \textbf{Input:} MMDP environment, stage-aware policy $\pi_\theta(a\mid s,n)$, value model $V_\psi$, base optimizer $\mathcal O\in\{\mathrm{PPO},\mathrm{GRPO}\}$, executor $E$, search budget $K$, and distillation weight $\lambda_{\mathrm{dist}}$. \\
\midrule
1 & Initialize $\pi_\theta$, $V_\psi$, and an empty distillation buffer $\mathcal B$. \\
2 & \textbf{for} each training iteration \textbf{do} \\
3 & \quad Collect on-policy trajectories using the stage-aware interface. At stage $n$, mask infeasible actions and, for MMDP-AA, expose only abstract expiring decisions in $\mathcal D_n$. \\
4 & \quad Compute returns and advantages for $\mathcal O$; for GRPO, normalize trajectory returns within sampled groups. \\
5 & \quad For selected visited states $(s,n)$, form $\mathcal C_K(s,n,\pi_\theta)\subseteq\mathcal D_n$ from policy proposals and feasible stage-local candidates. \\
6 & \quad Convert each abstract candidate $a\in\mathcal C_K$ to an executable action $E(s,n,a)$ and estimate $\widehat Q(s,n,a)$ by rollout, simulation, or task-specific scoring. \\
7 & \quad Store $a^{\mathrm{search}}=\arg\max_{a\in\mathcal C_K}\widehat Q(s,n,a)$ in $\mathcal B$. \\
8 & \quad Update $\theta,\psi$ with the base PPO/GRPO objective and the combined policy loss $\mathcal L_{\mathrm{RL}}+\lambda_{\mathrm{dist}}\mathcal L_{\mathrm{dist}}(\mathcal B)$. \\
9 & \textbf{end for}; at evaluation time, use either the raw policy or the same search step as a test-time improvement operator. \\
\bottomrule
\end{tabularx}
\end{table}

\section{Environment Details}
\subsection{Staged Replenishment Environment}
\label{app:replenishment_env}

This section provides the full specification of the staged replenishment benchmark used in the main experiments.

\paragraph{Decision timeline}

Time is organized into weekly decision cycles indexed by $k=1,2,\ldots$. 
Each cycle contains two decision stages. 

At Stage $(k,1)$, which occurs at the beginning of cycle $k$, the agent observes a coarse demand forecast for the upcoming week and may place replenishment orders from both the regular supplier and the flexible supplier. 

At Stage $(k,2)$, which occurs later within the same week, the agent observes a refined demand forecast and may place an additional order from the flexible supplier only. 

All orders placed during cycle $k$ are delivered at the beginning of cycle $k+1$.

Within each cycle, demand is realized daily over seven days. 
We index days by $t$, where each cycle $k$ corresponds to a block of seven consecutive days.

\paragraph{Demand process}

Daily demand is generated from a stochastic process with multiple components capturing both low-frequency and short-term fluctuations. 
Specifically, demand is decomposed into four additive components:

\[
d_t = b_t + s_t + h_t + \epsilon_t ,
\]

where $b_t$ represents a slowly evolving weekly base demand, $s_t$ captures weekday seasonality, $h_t$ denotes occasional holiday or event shocks, and $\epsilon_t$ is residual noise.

The weekly base demand follows a mean-reverting stochastic process to capture low-frequency variation in overall demand levels. 
Weekday seasonality introduces systematic demand differences across days of the week, while holiday shocks randomly occur in certain weeks and primarily affect weekend demand. 
Short-term fluctuations are generated through an autoregressive component that produces high-frequency demand bursts.

Weekly demand for cycle $k$ is defined as the aggregation of daily demand over the seven days in that cycle.

\paragraph{Forecast signals}

The two decision stages receive demand forecasts with different information sets.

At Stage $(k,1)$ the agent observes a \emph{coarse forecast} $\hat D_{k+1,1}$ of the total demand in cycle $k+1$. 
This forecast is primarily based on low-frequency signals such as the weekly base level and weekday seasonality, while holiday effects are only weakly observable. Consequently, the forecast contains substantial noise.

At Stage $(k,2)$ the agent receives a \emph{refined forecast} $\hat D_{k+1,2}$ of the demand in cycle $k+1$. In addition to the low-frequency signals available earlier, this forecast incorporates recent realized demand observations and stronger visibility of holiday shocks. 
As a result, the Stage $(k,2)$ forecast provides a more accurate estimate of the remaining demand in the current cycle.

This forecast structure creates a natural information refinement across stages, which aligns with the MMDP setting where the information set expands over time.

\paragraph{Actions}

At Stage $(k,1)$ the agent selects replenishment quantities from the two suppliers:

\[
a_{k,1} = (q^{\text{reg}}_k,\, q^{\text{flex1}}_k)
\]

where $q^{\text{reg}}_k$ and $q^{\text{flex1}}_k$ denote the order quantities from the regular and flexible suppliers respectively.

At Stage $(k,2)$ the agent may place an additional order from the flexible supplier:

\[
a_{k,2} = q^{\text{flex2}}_k .
\]

All three orders $q^{\text{reg}}_k$, $q^{\text{flex1}}_k$, and $q^{\text{flex2}}_k$ are delivered at the beginning of cycle $k+1$.

Under the flat MDP formulation, the action space is treated as a unified decision space across stages. 
Under the proposed MMDP formulation, however, the admissible action set becomes stage-dependent, consistent with the decision sets $\mathcal{D}_n$ defined in Theorem~\ref{thm:expiring_priority}. In particular, at Stage $(k,1)$ only the regular-supplier decision remains active, 
while the flexible adjustment decision is deferred to Stage $(k,2)$.

\paragraph{Inventory dynamics}

Inventory evolves on a daily basis. 
Let $I_t$ denote the on-hand inventory at the beginning of day $t$. 
Demand during day $t$ reduces inventory according to

\[
I_{t+1} = \max\{0,\, I_t - d_t\},
\]

which corresponds to a lost-sales setting.

At the beginning of the first day of cycle $k+1$, the replenishment orders placed during cycle $k$ arrive and inventory is updated as

\[
I_{t^+} = I_t + q^{\text{reg}}_k + q^{\text{flex1}}_k + q^{\text{flex2}}_k .
\]

Here $I_{t^+}$ denotes the inventory level immediately after order arrivals but before demand realization on that day.

\paragraph{State variables}

At each decision stage $(k,i)$, the environment maintains internal state variables including the current week index, the stage indicator, the current on-hand inventory, and outstanding replenishment orders scheduled to arrive at the next cycle. In addition, the environment computes stage-dependent demand signals from recent demand history and the current forecast.

Specifically, the agent observes a unified state vector
\[
s_{k,i}
=
\bigl(
k,\,
i,\,
I_t,\,
P_{k,i},\,
\hat D_{k+1,i},\,
H_t
\bigr),
\]
where $I_t$ denotes the current on-hand inventory, $P_{k,i}$ denotes the total pipeline inventory scheduled to arrive at the beginning of the next cycle, $\hat D_{k+1,i}$ is the stage-specific demand forecast (coarse at Stage $(k,1)$ and refined at Stage $(k,2)$), and $H_t$ is a recent-demand summary computed from the previous seven days.

The pipeline inventory is defined as the sum of all replenishment orders placed in the current cycle that have not yet arrived:
\[
P_{k,i}
=
q^{\text{reg}}_k
+
q^{\text{flex1}}_k
+
\mathbf{1}_{\{i=2\}}\, q^{\text{flex2}}_k .
\]

The recent-demand summary is computed as
\[
H_t = \sum_{\tau=t-7}^{t-1} d_\tau .
\]

This additional signal helps the agent infer short-term demand bursts beyond the stage forecast alone.

\paragraph{Reward function}

The reward associated with cycle $k$ consists of three components.

First, procurement costs are incurred when orders are placed in cycle $k$:

\[
C_{\text{proc}}(k) = 
c_{\text{reg}} q^{\text{reg}}_k +
c_{\text{flex}} (q^{\text{flex1}}_k + q^{\text{flex2}}_k).
\]

Second, inventory holding costs accumulate during cycle $k+1$ based on the realized inventory levels:

\[
C_{\text{hold}} = 
\sum_{t \in \text{cycle }k+1} h \, I_t .
\]

Third, shortage penalties are incurred for lost sales:

\[
C_{\text{short}} =
\sum_{t \in \text{cycle }k+1} p \, (d_t - I_t)^+ .
\]

The total reward for cycle $k$ is defined as

\[
R_k =
- C_{\text{proc}}(k)
- \alpha C_{\text{hold}}
- \beta C_{\text{short}},
\]

where $\alpha$ and $\beta$ are weighting parameters controlling the relative importance of holding costs and shortage penalties.

\paragraph{Episode length}

Each trajectory consists of multiple decision cycles, allowing the learning algorithm to observe repeated staged decision patterns under rolling inventory dynamics. 
Using multiple cycles within a trajectory allows the inventory process to stabilize and reduces the influence of the initial inventory condition, ensuring that the learned policy reflects steady-state operational behavior.

\subsection{Cash Management Environment}
\label{app:cashflow_env}
\subsubsection{Simplified Cash-Management Environments}
\label{app:simplified_cashflow_env}

This section specifies the simplified cash-management environment family used in the main experiments, including the five-account and ten-account settings.

\paragraph{Decision timeline}

Each trajectory corresponds to a single day and therefore contains one decision cycle. The cycle is divided into two decision stages.

At Stage~(1), which occurs in the morning, the agent observes a coarse forecast of end-of-day account outflows and performs an initial round of cash rebalancing. At Stage~(2), which occurs later in the day, the agent observes additional information after part of the true outflows has been realized, and may perform a second round of adjustment.

Within each stage, the agent may execute multiple transfer steps. Each transfer is settled immediately once executed. At the end of the day, the remaining outflows are realized and the final account balances are settled.

\paragraph{Account structure}

The simplified environment family contains one master account, one investment account, and a set of operational accounts. The master account $A_0$ initially holds the largest balance and serves as the main funding source. The investment account $A_1$ has the highest overnight yield. The remaining accounts $A_2,\ldots,A_{K-1}$ are operational accounts that face uncertain end-of-day cash outflows, where $K=5$ in the five-account setting and $K=10$ in the ten-account setting.

The objective is to reallocate cash across accounts so as to avoid end-of-day shortages in operational accounts, while also preserving overnight yield and limiting unnecessary transfer costs.

\paragraph{Outflow process}

Let $L_j$ denote the true end-of-day outflow of operational account $A_j$, for $j\in\{2,\ldots,K-1\}$. For each operational account, the total daily outflow is generated from a stochastic distribution and then decomposed into two components:
\[
L_j = L^{\mathrm{morn}}_j + L^{\mathrm{aft}}_j,
\]
where $L^{\mathrm{morn}}_j$ is the outflow realized before Stage~(2), and $L^{\mathrm{aft}}_j$ is the remaining outflow realized at the end of the day.

This decomposition creates a natural sequential information structure: after the morning outflows have been realized, the agent obtains more accurate information about the remaining afternoon cash needs.

\paragraph{Forecast signals}

The two decision stages receive forecasts with different information sets.

At Stage~(1), the agent observes a coarse forecast
\[
\hat L^{(1)} = (\hat L^{(1)}_2,\ldots,\hat L^{(1)}_{K-1})
\]
of the full-day outflows of the operational accounts. This forecast is noisy and only provides an approximate estimate of end-of-day liquidity needs.

At Stage~(2), after the morning outflows $L^{\mathrm{morn}}_j$ have been realized, the agent observes a refined forecast
\[
\hat L^{(2)} = (\hat L^{(2)}_2,\ldots,\hat L^{(2)}_{K-1})
\]
of the remaining afternoon outflows. This forecast is generated from a richer information set and is therefore more accurate than the Stage~(1) forecast.

This forecast structure creates the information refinement required by the MMDP setting.

\paragraph{Actions}

At each transfer step, the agent selects a directed transfer edge and a transfer ratio. Formally, an action takes the form
\[
a=(e,\rho),
\]
where $e=(u,v)$ specifies a transfer from source account $u$ to target account $v$, and $\rho\in[0,1]$ is a transfer ratio.

The transfer amount is determined heuristically as
\[
x = \rho \cdot \min\{\text{source surplus},\ \text{target predicted gap}\},
\]
where the source surplus and target predicted gap are computed from the current balances and stage-specific forecasts. If the target account is the investment account, the transfer amount is instead based only on the available source surplus.

At Stage~(1), the admissible transfer graph is largest. At Stage~(2), only a restricted subset of these edges remains admissible, reflecting a shrinking action set due to channel closures or operational cutoffs later in the day. Thus, the second stage preserves only a limited set of emergency or still-feasible transfer routes.

\paragraph{Balance dynamics}

Let $B_j^{(m)}$ denote the balance of account $A_j$ after the $m$-th transfer step within a stage. If a transfer of amount $x$ is made from account $u$ to account $v$, balances are updated immediately as
\[
B_u^{(m+1)} = B_u^{(m)} - x,
\qquad
B_v^{(m+1)} = B_v^{(m)} + x.
\]

After Stage~(1), the realized morning outflows are deducted:
\[
B_j \leftarrow B_j - L^{\mathrm{morn}}_j.
\]

After Stage~(2), the remaining afternoon outflows are deducted:
\[
B_j \leftarrow B_j - L^{\mathrm{aft}}_j.
\]

Negative balances are allowed during settlement and are interpreted as end-of-day funding shortfalls.

\paragraph{State variables}

At each decision step, the environment maintains the current stage, the within-stage step index, the current account balances, and the stage-specific forecast of remaining cash outflows.

Specifically, the agent observes a state vector of the form
\[
s =
\bigl(
i,\,
m,\,
B_0,\ldots,B_{K-1},\,
\hat L_i
\bigr),
\]
where $i\in\{1,2\}$ denotes the current stage, $m$ is the within-stage step index, $(B_0,\ldots,B_{K-1})$ are the current account balances, and $\hat L_i$ is the stage-specific forecast of remaining operational-account outflows.

Thus, the state summarizes both the current liquidity position of the account network and the currently available information about future cash needs.

\paragraph{Reward function}

The reward consists of transfer costs incurred during decision making and end-of-day settlement terms.

For each transfer of amount $x$, the agent incurs a transaction fee
\[
C_{\mathrm{fee}} = c_{\mathrm{fee}} x.
\]

At the end of the day, positive balances generate overnight yield, while negative balances incur shortage penalties. Let $r_j$ denote the yield rate of account $A_j$, and let $p_j$ denote the shortage penalty rate. Then the terminal settlement components are
\[
C_{\mathrm{yield}} = \sum_{j=0}^{K-1} r_j \max(B_j,0),
\qquad
C_{\mathrm{gap}} = \sum_{j=0}^{K-1} p_j \max(-B_j,0).
\]

The terminal reward is defined as
\[
R_{\mathrm{terminal}}
=
\alpha C_{\mathrm{yield}} - \beta C_{\mathrm{gap}},
\]
where $\beta \gg \alpha > 0$, reflecting that avoiding end-of-day shortages is the primary objective, while preserving overnight yield is secondary.

The total trajectory reward is therefore
\[
R
=
- \sum_{\text{transfers}} C_{\mathrm{fee}}
+ R_{\mathrm{terminal}}.
\]

This reward design induces the intended trade-off: the agent should first avoid costly shortfalls, then preserve cash in the highest-yield account whenever possible, while avoiding unnecessary transfers.

\paragraph{Environment scaling}

The five-account setting uses a deliberately simple transfer graph so that the staged structure remains easy to interpret: the master account serves as the main funding source, and only a small number of operational transfer routes are exposed. The ten-account setting preserves the same staged logic but introduces a denser operational transfer graph and more operational-account interactions, thereby creating a substantially larger and more combinatorial rebalancing problem while retaining stage-dependent route contraction.

\paragraph{Episode length}

Each trajectory contains exactly one daily decision cycle with two stages. This design focuses attention on the intraday staged decision structure itself, rather than on cross-day inventory or cash carryover dynamics. As a result, the environment isolates the role of increasing information and shrinking action sets in staged cash rebalancing.

\subsubsection{Production-Scale Cash-Management Simulator}
\label{app:prod_cashflow_env}

In addition to the releasable five-account and ten-account benchmarks, we evaluate the proposed framework in a larger production-scale cash-management simulator derived from the target operational environment. This simulator is not released, but it preserves the same core decision structure studied in the main paper: liquidity must be rebalanced sequentially across an account network under progressively improving information and stage-dependent action constraints.

\paragraph{Decision timeline}

Each trajectory corresponds to a single business day and contains one intraday decision cycle. The cycle is divided into three decision stages aligned with progressively later operational cutoffs. At each stage, the agent may execute multiple transfer steps before either advancing to the next stage or reaching the stage-specific step limit. Transfers are settled immediately once executed. Final account outflows are settled once at the end of the day.

\paragraph{Account network}

The simulator contains a 27-account network with heterogeneous account roles and operational functions. These include funding accounts, liquidity-consuming operational accounts, and accounts with different overnight yield characteristics. To preserve confidentiality, we do not disclose the exact account identities or the institution-specific account taxonomy. For the purposes of learning and evaluation, the key property is that accounts differ both in expected liquidity needs and in the value of retaining residual balances overnight.

\paragraph{Forecast refinement}

At each stage, the agent observes a forecast of the remaining same-day outflows for the account network. Early-stage forecasts are noisier and reflect only partial operational information, while later-stage forecasts incorporate progressively richer updates from realized activity and internal operational signals. Consequently, the information set expands over the day, and the agent gains a more accurate estimate of the remaining liquidity demand as the decision cycle unfolds.

\paragraph{Stage-dependent route availability}

Transfers are executed over a sparse directed route graph rather than a fully connected account network. The admissible route set is stage-dependent and shrinks over time due to operational cutoffs and channel availability. Formally, if $\mathcal E_0$, $\mathcal E_1$, and $\mathcal E_2$ denote the admissible transfer edges at the three stages, then
\[
\mathcal E_2 \subset \mathcal E_1 \subset \mathcal E_0.
\]
In contrast to the simplified benchmarks, many business-critical routes remain available across multiple stages, while more peripheral or time-sensitive routes expire earlier. This makes the production-scale simulator a weaker but more realistic form of the MMDP structure: information still increases and action flexibility still contracts, but the contraction is less extreme and more uneven across routes.

\paragraph{Actions}

At each transfer step, the agent chooses a directed transfer edge together with a transfer magnitude parameter. In the full simulator, this corresponds to selecting both \emph{where} liquidity should be moved and \emph{how much} should be transferred under the currently available route and balance constraints. In the structure-aware edge-only abstraction used in the main experiments, the learned policy selects only the transfer edge, while the amount is supplied by a lower-level stage-aware executor based on the current balances, predicted remaining outflows, and account-specific protection buffers.

\paragraph{Balance dynamics}

Let $B_j^{(m)}$ denote the balance of account $A_j$ after the $m$-th transfer step in the current stage. A transfer of amount $x$ from account $u$ to account $v$ updates balances immediately according to
\[
B_u^{(m+1)} = B_u^{(m)} - x,
\qquad
B_v^{(m+1)} = B_v^{(m)} + x.
\]
No cross-stage holding queue is introduced for transfers in the reported simulator; the main source of temporal asymmetry is instead the shrinking route set and the improving forecast of remaining outflows. After the final stage, the true total end-of-day outflows are realized and settled against the final balances.

\paragraph{Reward structure}

The reward used in the production-scale simulator combines per-transfer operating costs with an end-of-day settlement objective. Each executed transfer incurs a transaction-dependent cost and a per-step execution penalty. At the end of the day, positive balances generate overnight yield, while negative balances incur shortage penalties. The resulting terminal objective has the same qualitative form as in the simplified environments: avoiding end-of-day shortfalls is the dominant priority, while preserving yield is secondary. In the reported experiments, additional stage-level shaping terms may be included to stabilize learning under the larger action space and longer intraday control horizon.

\paragraph{Scope of disclosure}

Because this simulator is derived from a real operational environment, we do not release the exact route tables, account semantics, or internal parameter calibration. Nevertheless, the simulator preserves the structural features that matter for the present study: a multi-account intraday cash-rebalancing problem, progressively refined information, and stage-dependent route availability over a sparse financial network. Its role in the paper is therefore to test whether the same MMDP-based modeling and structure-aware learning principles remain informative under more realistic operational conditions.

\subsection{Benchmark Scale and Decision Complexity}
\label{app:benchmark_complexity}

Table~\ref{tab:benchmark_complexity} summarizes the effective decision
complexity of the experimental environments. For a cash-transfer action
$(e,\rho)$ with edge $e$ and ratio $\rho\in[0,1]$, a search procedure with $K$
candidate ratios has approximate stage-local branching factor $|\mathcal E_n|K+1$,
including the stage-end action.

\begin{table}[htbp]
\centering
\small
\caption{Scale and effective decision complexity of the experimental environments.}
\label{tab:benchmark_complexity}
\begin{tabularx}{\linewidth}{lYYY}
\toprule
Environment & Scale and horizon & Flat formulation & MMDP formulation \\
\midrule
Replenishment &
$13$ weekly cycles; two stages per cycle &
PPO action dim. $4$; DQN table size $4^4=256$ &
PPO action dim. $2$; DQN table size $4^2=16$ \\
\midrule
Cash-5 &
$5$ accounts; at most $10+10$ transfer steps; state dim. $28$ &
Stage edges $14\!\rightarrow\!8$; hybrid action $(e,\rho)$ &
Stage edges $6\!\rightarrow\!8$; branching about $6K+1$ at stage 0 \\
\midrule
Cash-10 &
$10$ accounts; at most $16+16$ transfer steps; state dim. $53$ &
Stage edges $74\!\rightarrow\!58$; branching about $74K+1$ at stage 0 &
Stage edges $16\!\rightarrow\!58$; branching about $16K+1$ at stage 0 \\
\midrule
Production-scale &
Production-scale account network with real cutoffs and constraints &
Large constrained transfer network under the flat formulation &
Stage-local action support with production feasibility rules \\
\bottomrule
\end{tabularx}
\end{table}
For the ten-account benchmark, the effect is especially clear at the first
stage. A Flat-MDP search procedure must consider $74$ transfer edges, whereas
the MMDP formulation only considers the $16$ expiring edges. If each edge is
paired with $K$ candidate transfer ratios, the stage-0 branching factor is
approximately $74K+1$ for Flat-MDP and $16K+1$ for MMDP, including the stage-end
action. Thus, MMDP reduces the first-stage branching factor to roughly
$(16K+1)/(74K+1)$ of the flat formulation, which approaches $16/74\approx
21.6\%$ as $K$ grows. This reduction becomes more important for multi-step
search, where branching factors multiply across depth.

\section{Supplementary Experimental Details}
\label{app:supplementary_protocols}

\subsection{Training and Evaluation Protocols}
\label{app:training_eval_protocols}

Unless otherwise stated, all learned RL methods are trained with five random seeds $\{42,43,44,45,46\}$. Evaluation uses environment seeds disjoint from the training rollouts. During training, checkpoints are evaluated at a fixed frequency, and final reported scores are computed using deterministic policies over $20$ evaluation episodes per training seed. Learning curves report the mean evaluation reward across seeds, with shaded regions denoting one standard deviation across seed-level means.

\begin{table}[htbp]
\centering
\small
\caption{Training and evaluation protocol summary.}
\label{tab:training_eval_protocols}
\begin{tabularx}{\linewidth}{lYYYY}
\toprule
Benchmark & Train seeds & Timesteps & Checkpoint eval & Final eval \\
\midrule
Replenishment & $5$ & $100$K & every $5$K steps; $10$ episodes & $20$ episodes/seed \\
Cash-5 & $5$ & $500$K & every $5$K steps; $10$ episodes & $20$ episodes/seed \\
Cash-10 & $5$ & $0.6$M--$1.5$M & every $5$K steps; $10$--$20$ episodes & $20$ episodes/seed \\
Production-scale & $5$ & up to $1.5$M & every $20$K steps; $10$ episodes & $20$ episodes/seed \\
\bottomrule
\end{tabularx}
\end{table}

For search-augmented methods, ``Search'' denotes both search-assisted evaluation
and search-distillation during training, following
Section~\ref{sec:search-augmented_learning}. In the cash-management experiments,
search enumerates candidate transfer ratios from
$\{0.25,0.5,0.75,1.0\}$ and also includes the policy-proposed ratio when
applicable. Baseline policies and direct-LLM policies are evaluated under the
same fixed-seed protocol as the learned policies.

\subsection{Hyperparameter Summary}
\label{app:hyperparameters}

Table~\ref{tab:rl_hyperparameters} summarizes the main RL hyperparameters. We
use the same settings for Flat-MDP and MMDP variants within each benchmark
unless a method-specific architecture is explicitly introduced, such as
SA-MMDP, MMDP-AA, or GRPO.

\begin{table}[htbp]
\centering
\small
\caption{Main hyperparameters for learned RL methods.}
\label{tab:rl_hyperparameters}
\begin{tabularx}{\linewidth}{llYYYY}
\toprule
Setting & Algorithm & LR & Rollout/batch & Discount & Network \\
\midrule
Replenishment & PPO & $3{\times}10^{-4}$ & $256/64$ & $0.99$ & $[128,128]$ \\
Replenishment & DQN & $5{\times}10^{-5}$ & batch $128$ & $0.99$ & $[256,256]$ \\
Cash-5 & PPO & $3{\times}10^{-4}$ & $256/64$ & $0.99$ & $[128,128]$ \\
Cash-10 & PPO & $3{\times}10^{-4}$ & $256/64$ or $512/128$ & $0.99$ & $[256,256]$ \\
\bottomrule
\end{tabularx}
\end{table}

All PPO variants use GAE with $\lambda=0.95$ for the simplified benchmarks, clipping parameter $0.2$,
value-loss coefficient $0.5$--$0.7$, and gradient clipping at $0.5$. DQN uses a
replay buffer of $50$K transitions, $8$K warm-up steps, target-network update
interval $4$K, and $\epsilon$-greedy exploration annealed from $1.0$ to $0.1$.
For GRPO, each update samples a group of $32$ trajectories and normalizes
returns within subgroups of size $8$. Search distillation uses a replay buffer
of $256$--$2048$ improved decisions, one distillation epoch per rollout,
mini-batch size $64$, and distillation coefficient $0.05$--$0.3$ depending on
the benchmark.

\subsection{Heuristic Baseline Policy}
\label{app:cash_heuristic}

As a strong non-learning benchmark, we implement a stage-aware heuristic policy for the cash-flow control environment. The heuristic follows a simple operational principle: at the early stage, it reacts conservatively to coarse forecasts and avoids over-committing liquidity; at the later stage, after refined information is revealed, it performs more aggressive corrective transfers. Thus, the policy follows a \emph{conservative planning + late adjustment} logic.

At each step, the heuristic first checks whether any operational account is predicted to face a funding gap. If so, it prioritizes the account with the largest predicted gap and transfers funds from an account with available surplus, giving priority to the master account and then to a designated operational pivot account. If no urgent gap remains, the heuristic transfers residual surplus to the investment account in order to improve overnight yield. If no meaningful transfer is available, the stage terminates.

To avoid overreaction to noisy early-stage forecasts, the heuristic applies a discounted gap estimate and uses a smaller transfer ratio at Stage~1. At Stage~2, it uses the refined forecast directly and adopts a larger transfer ratio. In both stages, a liquidity buffer is retained in the source account, and very small transfers are suppressed.










In our implementation, Stage~1 uses a discounted gap estimate and a conservative transfer ratio, while Stage~2 uses the refined forecast directly and a more aggressive transfer ratio. This heuristic therefore serves as a transparent benchmark that already exploits staged information, but does so through fixed rules rather than learning.

\subsection{Recurrent Partial-Observation Baseline}
\label{app:cash_partial_obs_baseline}

As an additional baseline, we evaluate a recurrent flat partial-observation PPO policy. The purpose of this baseline is to test whether the benefits of MMDP modeling can be recovered simply by giving a flat policy recurrent memory under partial observability, without explicitly encoding the staged information structure.

This baseline uses the same flat action semantics as the Flat-MDP formulation, but restricts the observation to operationally observable quantities and removes explicit stage-identifying metadata. In particular, the policy observes within-stage progress, current account balances, predicted remaining outflows, and the implied gap/surplus signals, but does not receive an explicit stage indicator or explicit shock metadata.

We train this baseline using recurrent PPO with an LSTM policy. Thus, the comparison is not against a classical POMDP solver with belief-state planning, but against a learned recurrent controller operating on a partially observed flat decision interface. This makes it a useful baseline for separating the effect of recurrent memory from the effect of explicit MMDP structure.

\subsection{Exploratory Direct-LLM Baselines}
\label{app:cash_llm}

To probe whether general-purpose language models can solve the cash-control task without explicit structure-aware RL, we additionally evaluate a small set of \emph{direct-LLM} baselines on the ten-account benchmark. These experiments are exploratory and are not part of the main comparison in the paper. Their purpose is not to define a new methodological axis, but to contextualize the difficulty of the task: can a commercial LLM, given a textual description of the state and admissible actions, directly output a competitive transfer decision?

For fairness, we use a direct-policy protocol. At each decision step, the model receives a structured textual summary of the current state, including stage index, account balances, forecast information, and the currently admissible transfer edges. It must then return a single action in a constrained JSON format. We do not provide tool use, external search, hand-crafted reranking, or access to a downstream heuristic executor beyond the same action interface already used by the RL agents. In particular, these baselines are intended to test \emph{direct sequential control}, rather than hybrid systems that rely on additional symbolic planning components.

We evaluate three representative API-accessible models: Qwen3-Max, GLM-4.7, and Kimi-K2.5. All direct-LLM baselines are tested on the same fixed-seed ten-account environment used in the main evaluation protocol, using five seeds and twenty episodes per seed. Each request uses the same ratio options $\{0.25,0.5,0.75,1.0\}$ and a JSON-object response constraint. Table~\ref{tab:direct_llm_api_settings} summarizes the API-level settings.

\begin{table}[htbp]
\centering
\small
\caption{API settings for exploratory direct-LLM baselines.}
\label{tab:direct_llm_api_settings}
\begin{tabular}{llll}
\toprule
Provider & Model & Decoding setting & Output constraint \\
\midrule
DashScope & Qwen3-Max & temperature $0.0$; thinking disabled & JSON object \\
Zhipu & GLM-4.7 & temperature $0.01$; thinking disabled & JSON object \\
Moonshot & Kimi-K2.5 & provider-default temperature; thinking disabled & JSON object \\
\bottomrule
\end{tabular}
\end{table}

Table~\ref{tab:direct_llm_prompt_stats} reports prompt-size statistics reconstructed from the logged evaluation trajectories. Since provider-reported token usage was not stored in the cache, we report estimated input tokens using character count divided by four, including both system and user messages. The key difference is at the first stage: under Flat-MDP prompting the model sees all $74$ transfer edges, whereas under MMDP prompting it sees only the $16$ expiring edges. Thus, the MMDP prompt is substantially shorter at the coarse-information stage where action reduction matters most. At the second stage, both formulations expose the same $58$ persistent edges, so their prompt sizes are comparable; the MMDP prompt is slightly longer because it includes additional stage-semantics guidance.

\begin{table}[htbp]
\centering
\small
\caption{Estimated prompt sizes for direct-LLM decisions on the ten-account benchmark.}
\label{tab:direct_llm_prompt_stats}
\begin{tabular}{llrr}
\toprule
Formulation & Stage & Feasible edges & Estimated input tokens \\
\midrule
Flat-MDP & Stage 0 & $74$ & $3.5$K \\
MMDP & Stage 0 & $16$ & $1.6$K \\
Flat-MDP & Stage 1 & $58$ & $3.0$K \\
MMDP & Stage 1 & $58$ & $3.1$K \\
\bottomrule
\end{tabular}
\end{table}

In our experiments, some LLMs outperform weak non-learning baselines and produce nontrivial transfer behavior, indicating that they can partially exploit the semantic structure exposed by the prompt. However, the best direct-LLM policies remain clearly below the strongest structure-aware RL methods, especially relative to MMDP-based policies augmented with search. We emphasize that these results should be interpreted with caution. Direct-LLM performance depends on provider-side model updates, API stability, formatting reliability, and prompt sensitivity, all of which are substantially less controlled than the training and evaluation pipeline for the RL baselines. For this reason, we treat them as supplementary evidence rather than as a central claim of the paper.

\end{document}